\documentclass{article}
\usepackage[margin=1.5in]{geometry}
\usepackage[utf8]{inputenc} 
\usepackage[T1]{fontenc}    
\usepackage{hyperref}       
\usepackage{url}            
\usepackage{booktabs}       
\usepackage{amsfonts}       
\usepackage{nicefrac}       
\usepackage{microtype}      
\usepackage{amsfonts}
\usepackage{amsmath,amsthm}
\usepackage{amssymb,mathrsfs}
\usepackage{enumerate}
\usepackage{algorithm}
\usepackage[noend]{algpseudocode}
\usepackage{algorithmicx}
\usepackage[noend]{algpseudocode}
\usepackage{graphicx}
\usepackage{wrapfig}
\usepackage{mathtools}
\usepackage[stable]{footmisc}
\DeclareMathOperator*{\argmax}{arg\,max}
\DeclareMathOperator*{\argmin}{arg\,min}

\usepackage{tikz}
\usetikzlibrary{arrows,positioning,fit,backgrounds}
\usetikzlibrary{calc}

\interdisplaylinepenalty=2500

\usepackage{authblk}

\usetikzlibrary{arrows,positioning,fit,backgrounds,shapes}
\usetikzlibrary{calc}

\newtheorem{definition}{Definition}

\newtheorem{theorem}{Theorem}

\newtheorem{lemma}{Lemma}

\newtheorem{proposition}{Proposition}

\newtheorem{example}{Example}

\newcommand{\be}{\begin{equation}}
\newcommand{\ee}{\end{equation}}
\newcommand{\ben}{\begin{enumerate}}
\newcommand{\een}{\end{enumerate}}
\newcommand{\bea}{\begin{eqnarray}}
\newcommand{\eea}{\end{eqnarray}}
\newcommand{\bean}{\begin{eqnarray*}}
\newcommand{\eean}{\end{eqnarray*}}

\newcommand{\E}{\mathbb{E}}

\newcommand{\gen}{\mathrm{gen}}

\newcommand{\risk}[0]{\mathrm{risk}}
\newcommand{\W}[0]{\mathcal{W}}

\newcommand{\f}{\mathbf{T}}
\newcommand{\dw}{\mathrm{d}w}
\newcommand{\intw}{\int_{w\in\mathcal{A}}}

\title{\bf Maximum Multiscale Entropy and \\ Neural Network Regularization}

\author[1]{Amir R. Asadi\footnote{\href{mailto:aasadi@princeton.edu}{aasadi@princeton.edu}}}
\affil[1]{Department of Electrical Engineering, Princeton University}

\author[2]{Emmanuel Abbe\footnote{\href{mailto:emmanuel.abbe@epfl.ch}{emmanuel.abbe@epfl.ch}}}
\affil[2]{Mathematics Institute and School of Computer and Communication Sciences, EPFL}

\date{June 25, 2020}

\begin{document}

\maketitle

\begin{abstract}
  A well-known result across information theory, machine learning, and statistical physics shows that the maximum entropy distribution under a mean constraint has an exponential form called the Gibbs--Boltzmann distribution. This is used for instance in density estimation or to achieve excess risk bounds derived from single-scale entropy regularizers (Xu--Raginsky '17). This paper investigates a generalization of these results to a multiscale setting. We present different ways of generalizing the maximum entropy result by incorporating the notion of scale. For different entropies and arbitrary scale transformations, it is shown that the distribution maximizing a multiscale entropy is characterized by a procedure which has an analogy to the renormalization group procedure in statistical physics. For the case of decimation transformation, it is further shown that this distribution is Gaussian whenever the optimal single-scale distribution is Gaussian. This is then applied to neural networks, and it is shown that in a teacher-student scenario, the multiscale Gibbs posterior can achieve a smaller excess risk than the single-scale Gibbs posterior. 
\end{abstract}

\section{Introduction}\label{Introduction section} 
Many real-world signals and physical systems have a large variety of length scales in their structures. 
Such multiscale structures have been studied and exploited on different fields and with different tools, such as in statistical mechanics, Gaussian processes, and signal processing among others. In this paper, we study distributions which maximize uncertainty at different scales simultaneously under a mean constraint.
We are in particular motivated by the goal of analyzing neural networks and their generalization error while exploiting their multilevel characteristic. 

The central notion studied in this paper, the \emph{multiscale entropy}, is simply defined by taking a linear mixture of entropies of a system at different length scales. Hence, the multiscale entropy is a generalization of the classical entropy. For instance, if $(W_1,W_2,\dots,W_d)$ denote the layers of a neural network, and for two distributions $P_{W_1\dots W_d}$ and $Q_{W_1\dots W_d}$ defined on the weight parameters, the  multiscale relative entropy between $P$ and $Q$ with index $\sigma=(\sigma_1,\dots,\sigma_d)$ is given by: 
\begin{align}
    &D_{(\sigma)}(P_{W_1\dots W_d}\|Q_{W_1\dots W_d})\nonumber\\& = \sigma_1 D(P_{W_1\dots W_d}\|Q_{W_1\dots W_d}) + \sigma_2 D(P_{W_1\dots W_{d-1}}\|Q_{W_1\dots W_{d-1}})+\dots+\sigma_d D(P_{W_1}\|Q_{W_1}).\label{multiscale entropy example}
\end{align}
Notice that $D_{(\sigma)}(P_{W_1\dots W_d}\|Q_{W_1\dots W_d})$ is a linear mixture of entropies each at different scales of the network, where the scale in this example is coupled to the depth of the network. Also note that classical relative entropy corresponding to the whole system $D(P_{W_1\dots W_d}\|Q_{W_1\dots W_d})$ is a special case of $D_{(\sigma)}(P_{W_1\dots W_d}\|Q_{W_1\dots W_d})$ when $\sigma=(1,0,\dots,0)$.
In this paper, we study the optimization of such entropies under a mean constraint, namely:
\begin{equation}
    \argmin_{P_{W_1\dots W_d}:\E[f(W_1,\dots,W_d)]=\mu}D_{(\sigma)}(P_{W_1\dots W_d}\|Q_{W_1\dots W_d}). \label{mme}
\end{equation}  
We refer to \eqref{mme} as the {\it minimum multiscale relative entropy}, and we sometimes also view this as part of a family of {\it maximum multiscale entropy} problems, where the `entropy' corresponds in this case to $-D$. 
This is a generalization of the widely known maximum entropy problem. For example, it is a well-known result in information theory that the minimizing distribution $P$ of 
\begin{equation}\label{Gibbs minimization problem}
	\E[f(W)]+\lambda D(P_W\|Q_W), 
\end{equation}
is the Gibbs--Boltzmann distribution $P(w)\propto \exp(-\frac{f(w)}{\lambda})Q(w)$,
where $Q$ is a fixed distribution, $W$ is distributed according to $P$, $f$ is a measurable function called the energy function, and $\lambda>0$. 
This fact\footnote{Notice that this is the Lagrangian of the problem of minimizing relative entropy $D(P_W\|Q_W)$ under the mean constraint $\E[f(W)]=\mu$.} dates back to the work of Jaynes \cite{jaynes1957information} on the maximum entropy inference, and was revisited\footnote{This was also generalized using different entropies such as Tallis entropy, R\'{e}nyi entropy and others; see \cite{peterson2013maximum} and references therein. 
} 
in a broader context by Csisz\'{a}r under the property that exponential families achieve the I-projection over linear families \cite{csiszar1975divergence}. This property has diverse important applications, such as in the celebrated papers on species distribution modeling by Phillips et al. \cite{phillips2004maximum} and natural language processing by Berger et al. \cite{berger1996maximum}, as well as in statistical mechanics \cite{jaynes1957information, jaynes1957information2}. 

This result also has a concrete application in the context of statistical learning theory. 
The empirical risk can be written as $\E[f(W)]$ where $W$, the output of the learning algorithm, depends on the sample distribution $P_S$ and $f$ depends on the loss function. The generalization error of an hypothesis generated under the algorithm/channel $P_{W|S}$ can then be bounded under mild assumptions using the KL-divergence $D(P_{S}P_{W|S} \| P_{S}P_{W})$ in PAC-Bayes and mutual information-based bounds \cite{catoni2007pac,Russo}. Therefore, the {\it minimum single-scale relative entropy} problem in \eqref{Gibbs minimization problem} becomes precisely an upper-bound on the population risk and its minimizer gives precisely the distribution under which one should sample from the hypothesis set to minimize this bound \cite{xu2017information}.

As mentioned, this paper studies multiscale versions of the previous two paragraphs. 
This is motivated by the multilevel nature of neural networks, where  
the set of all mappings between the input and each hidden layer is a refinement of the hypothesis set at different scales, each scale corresponding to the depth of the hidden layer.
It has recently been shown in \cite{asadi2018chaining,asadi2020chainingchainrule} that the notion of scale can be employed to further exploit the closeness and similarity among the hypotheses of a learning model to tighten the generalization bounds from \cite{Russo, xu2017information}. This is achieved by replacing the mutual information bound $D(P_{S}P_{W|S} \| P_{S}P_{W})$ with a sum of mutual informations that each consider the hypothesis set at different scales, as discussed in Section \ref{neural network section}. 

In the simplest case of two scales, this approach brings to light the following  generalization of (\ref{Gibbs minimization problem}): finding the minimizing distribution $P_{W_1W_2}$ of   
\begin{align}
	\E &[f(W_1,W_2)]+\lambda_1 D(P_{W_1W_2}\|Q_{W_1W_2})+\lambda_2 D(P_{W_1}\|Q_{W_1}),\label{Biscale Gibbs minimization problem}
\end{align}
where $(W_1,W_2)\sim P_{W_1W_2}$ and $\lambda_1,\lambda_2>0$. Notice how the regularizer has the form of a multiscale entropy as given in (\ref{multiscale entropy example}).
Here we assume that $W_1$ is a vector of random variables lying at the coarser scale, 
and $W_2$ is a vector of the rest of the random variables. Therefore at the finer scale, we observe the total variables $(W_1,W_2)$, and at the coarser scale, we only observe $W_1$.
Notice that this problem reduces to (\ref{Gibbs minimization problem}) when $\lambda_2=0$. However, when $\lambda_2>0$, we are amplifying the uncertainty at the coarser scale by taking it into account in both terms. We will next resolve this type of problem in a general context, relating the maximizing procedure to the renormalization group theory from statistical physics \cite{wilson1974renormalization, wilson1979problems}, and describe applications to neural networks. 

\subsection{Contributions of this paper}
\begin{enumerate}[(I)]
	\item We characterize the maximum multiscale distributions for arbitrary scale transformations and for entropies (discrete or continuous) in Theorem \ref{Max multiscale shannon entropy}, as well as for arbitrary scale transformations and relative entropies in Theorem \ref{multiscale relative entropy RG theorem}; we describe in particular how these are obtained from procedures that relate to the renormalization group in statistical physics.
	\item We show in Theorem \ref{Gaussian closedness} that for the special case of decimation scale transformation, which relates to the multilevel structure of neural networks, the optimal multiscale distribution is a multivariate Gaussian whenever the optimal single-scale distribution is multivariate Gaussian; see Section \ref{Gaussians section}. We then use this fact in our simulations in Section \ref{Experiment section} (point IV below).
	\item We demonstrate in Theorem \ref{DPG thoerem} the tightness of the excess risk bound for the multiscale Gibbs posterior over the  classical Gibbs excess risk bound \cite{raginsky2016information}, and provide an example in a teacher-student setting (i.e., data generated from a teacher network with smaller depth and learned by a deeper network) in Subsection \ref{TeacherStudent section}.
	\item We show in Subsection \ref{Experiment section} how the multiscale Gibbs posterior encompasses both the classical Gibbs posterior and the random feature training as special cases, and provide a simulation showing how the multiscale version improves on the two special cases in the teacher-student setting.
\end{enumerate}

\subsection{Further relations with prior work}
    \paragraph{\bf PAC-Bayes generalization bounds.} 
    The generalization bound used in this paper has commonalities with PAC-Bayes bounds, first introduced by \cite{McAllester,mcallester1999pac}, in that one first expresses prior knowledge by defining a prior distribution over the hypothesis class without assuming the truth of the prior. Then an information-theoretic `distance' between the prior distribution and any posterior---with which a hypothesis is randomly chosen---appears in the generalization bound. However, unlike the generic PAC-Bayes bound of \cite{mcallester1999pac}, our generalization bound is multiscale and uses the multiscale relative entropy rather than a single relative entropy for the whole hypothesis set. The motivation is to exploit the compositional structure of neural networks: instead of controlling the `complexity' of the whole network at once, we simultaneously control the added complexities of each layer with respect to its previous layers (i.e., the interactions between the scales). As we show in Theorem \ref{DPG thoerem}, with this multiscale bound we can guarantee a tighter excess risk than single-scale bounds. Variants of PAC-Bayes bounds have later been studied in e.g. \cite{mcallester2003simplified, seeger2002pac, langford2003pac}, and have been employed more specifically for neural networks in e.g. \cite{dziugaite2017computing, neyshabur2017pac, zhou2018non, dziugaite2018data}, but again these bounds are not multiscale. The paper \cite{audibert2004pac} combines PAC-Bayes bounds with generic chaining and obtains multiscale bounds that rely on auxiliary sample sets, however, an important difference between our generalization bound and \cite{audibert2004pac} is that our bound puts forward the \emph{multiscale entropic regularization} of the empirical risk, for which we can characterize the minimizer exactly. 
    
    \paragraph{\bf Renormalization group and neural networks.} Connections between the renormalization group and neural networks have been pointed out in the seminal works \cite{beny3124deep, mehta2014exact} and later in other papers such as \cite{lin2017does, iso2018scale, koch2018mutual, funai2018thermodynamics, koch2019deep}. These works have mostly focused on applying current techniques in deep learning such as different types of gradient-based algorithms applied on various neural network architectures to problems in statistical mechanics. However, we are employing renormalization group transformation in the other way around, to help with inference with neural networks. Another important difference between our approach and \cite{mehta2014exact} is that these authors perform renormalization group transformations in the \emph{function space} and map the neurons to spins, whereas we perform renormalization group transformations in the \emph{weight space} and map the synapses to spins.  
As a result, in our approach spin decimation does not mean that we ignore some neurons and waste their information. Rather, it simply means that we replace one layer of synapses between two consecutive hidden layers with a fixed \emph{reference mapping} (based on terminology of \cite[Section 4]{asadi2020chainingchainrule}) such as the identity mapping for residual networks, as in Section \ref{neural network section}. 

    \paragraph{\bf Chaining.} Multiscale entropies implicitly appear in the classical chaining technique of high-dimensional probability. For example, one can rewrite Dudley inequality \cite{Dudley} variationally to remove the square root function over the metric entropies and transform the bound into a linear mixture of metric entropies at multiple scales. This is also the case for the information-theoretic extension of chaining with mutual information \cite{asadi2018chaining} which our generalization bound is based upon in Section \ref{neural network section}; see \cite{asadi2020chainingchainrule}.
    
    \paragraph{\bf Approximate Bayesian inference for neural networks.} 
    The recent paper \cite{khan2019approximate} also studies approximate Bayesian inference for neural networks using Gaussian approximations to the posterior distribution, as we similarly do in Subsection \ref{Experiment section} based on Gaussian results of Section \ref{Gaussians section}. However, unlike our approach, their analysis is not multiscale and treats the whole neural network as a single block.
    
     \paragraph{\bf Phase-space complexity.} Between the definition of multiscale entropy in this paper and what papers \cite{zhang1991complexity,fogedby1992phase} refer to as ``phase space complexity'' in statistical mechanics, there exist notional similarities. Characterizing maximum phase space complexity distributions was left as an open question and conjectured to be related to the renormalization group in \cite{fogedby1992phase}.
\section{Multiscale entropies} 
Assume that $W$ is the state of a system or is data, which can be either a random variable or a random vector. We first give the definition of different entropies.
\begin{definition}[Entropy\footnote{All entropies in this paper are in nats.}]
	The Shannon entropy of a distribution $P$ defined on a set $\mathcal{W}$ is 
$
	H(P)=-\sum_{w\in \mathcal{W}} P(w)\log P(w), 
$
if $\W$ is discrete. The differential entropy of $P$ is defined as 
$
	h(P)=-\int_{w\in \mathcal{W}} P(w)\log P(w), 
$
if $\W$ is continuous. The relative entropy between distributions $P_W$ and $Q_W$ defined on the same set $\mathcal{W}$ is 
$
    D(P_W\|Q_W)=\sum_{w\in \mathcal{W}} P_W(w)\log\left(\frac{P_W(w)}{Q_W(w)} \right),
$
if $\mathcal{W}$ is discrete, and 
$
    D(P_W\|Q_W)=\int_{w\in \mathcal{W}} P_W(w)\log\left(\frac{P_W(w)}{Q_W(w)}\right)\mathrm{d}w,
$
if $\mathcal{W}$ is continuous.
\end{definition}
Next, we blend the notions of \emph{scale} and \emph{entropy} as follows:  Let $W^{(1)}\triangleq W$ and given a sequence of \emph{scale transformations} $\mathbf{T}\triangleq\{T_i\}_{i=1}^{d-1}$ assume that $W^{(i+1)}\triangleq T_i(W^{(i)})$ for all $1\leq i\leq d-1$. We define $W^{(i)}$ to be the \emph{scale} $i$ version of the random vector $W$. For a vector of non-negative reals $\mathbf{\sigma}=(\sigma_1,\dots,\sigma_d)$, let $\sigma_i$ denote the \emph{length coefficient} at scale $i$.
\begin{example} In a wavelet theory context, assume that $W$ represents the vector of pixels of an image, and each transformation $T_i$ takes the average value of all neighboring pixels and for each group outputs a single pixel with that average value, thus resulting in an lower resolution image. Hence, here $W^{(2)},\dots, W^{(d)}$ are respectively lower and lower resolution versions of $W$. 
\end{example}
For any given $W$, $\mathbf{T}$, and $\sigma$, we define the multiscale entropies as follows:
\begin{definition}[Multiscale entropy] The multiscale Shannon entropy is defined as 
\begin{equation}
\nonumber
	H_{(\mathbf{\sigma},\mathbf{T})}(W)\triangleq \sum_{i=1}^d {\sigma_i}H(W^{(i)}).
\end{equation}
Let the multiscale differential entropy be 
\begin{equation}
\nonumber
	h_{(\mathbf{\sigma}, \mathbf{T})}(W)\triangleq \sum_{i=1}^d \sigma_i h(W^{(i)}).
\end{equation}
    We define the multiscale relative entropy between distributions $P_W$ and $Q_W$ as
    \begin{equation}
    \nonumber
    	D_{(\sigma, \mathbf{T})}(P_W\|Q_{W}) \triangleq \sum_{i=1}^d \sigma_i D(P_{W^{(i)}}\|Q_{{W}^{(i)}}).
    \end{equation} 
\end{definition}
Notice that multiscale entropy encompasses classical entropy as a special case: it suffices to choose $\sigma = (1,0,\dots,0)$ to get $D_{(\sigma,\mathbf{T})}(P_W\|Q_W)=D(P_{W}\|Q_{W})$ and similarly for the Shannon and differential entropies. However, by taking positive values for $\sigma_i$, $i\geq 2$, we are  
emphasizing the entropy at coarser scales. Next, we focus on a special case of scale transformations called \emph{decimation} which relates with the multilevel structure of neural networks:\footnote{Multiscale relative entropy with decimation transformation is named ``multilevel relative entropy'' in \cite{asadi2020chainingchainrule}.} Assume that $\mathcal{W}=\mathcal{W}_1\times\dots\times \mathcal{W}_d$ and let $W\triangleq (W_1,\dots,W_d)$ denote a random vector partitioned into $d$ vectors. For example, $W$ can denote the synaptic weights of a neural network divided into its layers. For all $1\leq i\leq d$, define $W^{(i)}\triangleq (W_1,\dots,W_{d-i+1})$. Notice that $W^{(1)}=W$  
and larger $i$ gives more random variables in the vector $W$ that we stop observing in $W^{(i)}$. Therefore the scale transformations $\{T_i\}_{i=1}^{d-1}$ simply eliminate the layers one-by-one. In the theoretical physics literature, spin decimation is a type of scale transformation introduced by Kadanoff and Houghton \cite{kadanoff1975numerical} and Migdal \cite{migdal1975recursion}.
\begin{example} The decimation transformation is what is typically used in maps of cities of a certain region. As one zooms out of the region and views the map of the region at larger scales, one omits the smaller cities in the resulting maps.
\end{example}
\section{Maximum multiscale entropy}\label{Maximum multiscale entropy section}
In this section we derive maximum multiscale entropy distributions for different entropies. The key ingredient of the proofs of all derivations is the chain rule of relative entropy.
\subsection{Multiscale Shannon and differential entropy maximization}
Let $f$ be an arbitrary measurable function called the energy. Consider the problem of maximizing Shannon entropy under a mean constraint:
\begin{equation}\label{Max entropy problem}
    \argmax_{P_W: \E[f(W)]=\mu}H(W).
\end{equation}
For this, one solves for the maximizing distribution of the Lagrangian $H(W) -\lambda\mathbb{E}[f(W)]$, which by a well known result due to \cite{jaynes1957information} (see Lemma \ref{Gibbs is the maximizer entropy} in the Appendix), is given by the Gibbs--Boltzmann distribution $\widehat{P}_W(w)\propto \exp\left(-\lambda f(w) \right)$. Now, consider the following multiscale generalization of the previous problem, that is given $f$, $\mu$, $\mathbf{T}$ and $\sigma$, solving for
\begin{equation}
    \nonumber
    \argmax_{P_W: \E[f(W)]=\mu}H_{(\sigma, \mathbf{T})}(W).
\end{equation}
Notice that this problem reduces to (\ref{Max entropy problem}) in the special case when $\sigma_2=\dots = \sigma_{d}=0$. But for more general choices for the values of $\sigma_2,\dots,\sigma_{d}$, the uncertainty at the coarser scales are emphasized.
We form the Lagrangian as follows and define the unconstrained maximization problem:
\begin{equation}\label{Lagrangian Shannon}
	P^*_W\triangleq \argmax_{P_W} \left\{H_{(\sigma,\f )}(W) -\lambda\mathbb{E}[f(W)]\right\}.
\end{equation} 
In the following, for any $\lambda>0$, we find the maximizing distribution $P^{*}_{W}$. First, we require the definition of \emph{scaled distribution}, which is basically raising a probability distribution to a power:

\begin{definition}[Scaled distribution\footnote{This is also known as \emph{escort distribution} in statistical physics literature; see e.g. \cite{bercher2012simple}.}] Let $\theta>0$. If $P$ is a distribution on a discrete set $\mathcal{A}$, then for all $a\in \mathcal{A}$, we define the scaled distribution $(P)_{\theta}$ with
\begin{equation}
	\nonumber
	(P)_{\theta}(a)= \frac{(P(a))^{\theta}}{\sum_{x\in \mathcal{A}}(P(x))^{\theta}}.
\end{equation}
For analog random variables it is defined analogously except by replacing the sum in the denominator with an integral.
\end{definition}
Define the Gibbs distribution at the finest (microscopic) scale as  
\begin{equation}
	P^{\textrm{Gibbs}}_W(w) \triangleq \frac{\exp\left({-\frac{\lambda f(w)}{\sigma_1}}\right)Q(w)}{\sum_w \exp\left({-\frac{\lambda f(w)}{\sigma_1}}\right)Q(w)},
\end{equation}
which would be the minimizer of (\ref{Lagrangian Shannon}) had we had $\sigma_2 = \dots = \sigma_{d}=0$. Algorithm \ref{RC} receives the microscopic Gibbs distribution and the values of $\sigma_2, \dots, \sigma_{d}$ as its input, and outputs the desired distribution $P^{\star}_W$---the maximizer of (\ref{Lagrangian Shannon}).

\begin{algorithm}
    \caption{}
    \label{RC}
    \begin{algorithmic}[1]      
                 \State $U^{(1)}_{W^{(1)}}\gets P^{\rm{Gibbs}}_{W}$ \Comment{Initial microscopic Gibbs distribution}\label{initialization of MT}
                 \For{$i=2 \texttt{ to } d$} 
        	 	 \State $M^{(i)}_{W^{(i)}}\gets T_{i-1}\left(U^{(i-1)}_{W^{(i-1)}}\right)$ \Comment{Coarse-graining }
        	 	 \State $U^{(i)}_{W^{(i)}}\gets \left(M^{(i)}_{W^{(i)}}\right)_{\frac{\sigma_1+\dots+\sigma_{i-1}}{\sigma_{1}+\dots +\sigma_i}}$ \Comment{Renormalization}
     			 \EndFor
            \State \textbf{return} $P^{\star}_{W}= U^{(d)}_{W^{(d)}}U^{(d-1)}_{W^{(d-1)}|W^{(d)}}\dots U^{(1)}_{W^{(1)}|W^{(2)}}$ \Comment{Refinement}
    \end{algorithmic}
\end{algorithm}

For continuous (analog) random variables we replace multiscale Shannon entropy with multiscale differential entropy and consider the following problem:
$
    \max_{P_W: \E[f(W)]}h_{(\mathbf{\sigma},\mathbf{T})}(W).
$ 
We then form the Lagrangian and define 
\begin{equation}\label{Lagrangian differential}
	P^*_W\triangleq \argmax_{P_W} \left\{h_{(\mathbf{\sigma},\mathbf{T})}(W) -\lambda\mathbb{E}[f(W)]\right\}.
\end{equation}
Similarly, Algorithm \ref{RC} outputs $P^{\star}_W$, the maximizer of \eqref{Lagrangian differential}, except now one should define the initial microscopic Gibbs distribution for continuous random variables as 
\begin{equation}
\nonumber
	P^{\textrm{Gibbs}}_W(w) \triangleq \frac{\exp\left({-\frac{\lambda f(w)}{\sigma_1}}\right)Q(w)}{\int_w \exp\left({-\frac{\lambda f(w)}{\sigma_1}}\right)Q(w)\mathrm{d}w}.
\end{equation}
We prove the following theorem in the Appendix:
\begin{theorem}\label{Max multiscale shannon entropy}
	The solutions to the maximization problems (\ref{Lagrangian Shannon}) and \eqref{Lagrangian differential} are unique and are outputs of Algorithm \ref{RC}.
\end{theorem}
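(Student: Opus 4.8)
The plan is to reduce the joint multiscale optimization to a sequence of single-scale Gibbs problems by means of the chain rule, and then to check that this sequence is exactly what Algorithm~\ref{RC} implements. The first step is to rewrite the objective. Since each coarser variable is a deterministic image $W^{(i+1)}=T_i(W^{(i)})$, the entropy chain rule gives $H(W^{(i)})=H(W^{(i+1)})+H(W^{(i)}\mid W^{(i+1)})$, and telescoping yields $H(W^{(i)})=\sum_{k\ge i}H(W^{(k)}\mid W^{(k+1)})$ with the convention that conditioning on the trivial variable $W^{(d+1)}$ is vacuous. Substituting into $H_{(\sigma,\mathbf{T})}(W)=\sum_i\sigma_i H(W^{(i)})$ and exchanging the order of summation collapses the length coefficients into their partial sums $\Sigma_j\triangleq\sigma_1+\dots+\sigma_j$:
\begin{equation}
\nonumber
H_{(\sigma,\mathbf{T})}(W)=\sum_{j=1}^d \Sigma_j\, H\!\left(W^{(j)}\mid W^{(j+1)}\right).
\end{equation}
The point of this identity is that the joint law factorizes as $P_W=P_{W^{(d)}}\prod_{j=1}^{d-1}P_{W^{(j)}\mid W^{(j+1)}}$, and in this parametrization the coarsest marginal and each conditional are \emph{free, independent} parameters: the conditional $P_{W^{(j)}\mid W^{(j+1)}}$ (supported on the fibre $T_j^{-1}(\cdot)$) influences only the single summand $\Sigma_j H(W^{(j)}\mid W^{(j+1)})$ and, through $\E[f(W^{(1)})]=\E_{W^{(j+1)}}[\,\cdots\,]$, the energy, while leaving every coarser term and every coarser marginal untouched.

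This decoupling lets me carry out the maximization by backward induction, peeling off scales from the finest to the coarsest. For fixed coarser law, maximizing over $P_{W^{(1)}\mid W^{(2)}}$ amounts, separately for each value $w^{(2)}$, to maximizing $\Sigma_1 H(W^{(1)}\mid W^{(2)}=w^{(2)})-\lambda\,\E[f(W^{(1)})\mid W^{(2)}=w^{(2)}]$ over distributions on the fibre $T_1^{-1}(w^{(2)})$. This is precisely a single-scale entropy-maximization problem, so by Lemma~\ref{Gibbs is the maximizer entropy} its unique solution is the Gibbs conditional proportional to $\exp(-\lambda f/\Sigma_1)$ on the fibre, and the optimal value is the log-partition function $V_1(w^{(2)})\triangleq\Sigma_1\log\sum_{w^{(1)}\in T_1^{-1}(w^{(2)})}\exp(-\lambda f(w^{(1)})/\Sigma_1)$. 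Substituting this optimum turns the residual problem into one scale coarser with the \emph{accumulated free energy} $V_1$ in the role of a new energy; iterating, the $i$-th step solves a single-scale problem with weight $\Sigma_i$ and energy $-V_{i-1}$, producing the optimal conditional $\propto\exp(V_{i-1}/\Sigma_i)$ on the fibre and the recursion $V_i(w^{(i+1)})=\Sigma_i\log\sum_{w^{(i)}\in T_i^{-1}(w^{(i+1)})}\exp(V_{i-1}(w^{(i)})/\Sigma_i)$, with $V_0\triangleq-\lambda f$; the coarsest scale $i=d$ is identical but with no conditioning and yields the marginal $P_{W^{(d)}}\propto\exp(V_{d-1}/\Sigma_d)$.

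It then remains to identify these optima with the iterates of Algorithm~\ref{RC}. I would prove, by induction on $i$, that $U^{(i)}_{W^{(i)}}\propto\exp(V_{i-1}/\Sigma_i)$ (against the reference measure $Q$), so that its fibrewise conditional is exactly the optimal conditional above. The base case is the definition of $P^{\mathrm{Gibbs}}_W$ together with $\Sigma_1=\sigma_1$. For the inductive step, the coarse-graining line computes the pushforward $M^{(i)}_{W^{(i)}}=T_{i-1}(U^{(i-1)}_{W^{(i-1)}})$, which by the induction hypothesis satisfies $M^{(i)}(w^{(i)})\propto\sum_{w^{(i-1)}\in T_{i-1}^{-1}(w^{(i)})}\exp(V_{i-2}(w^{(i-1)})/\Sigma_{i-1})\propto\exp(V_{i-1}(w^{(i)})/\Sigma_{i-1})$; the renormalization line then raises this to the escort power $\Sigma_{i-1}/\Sigma_i$, giving precisely $U^{(i)}\propto\exp(V_{i-1}/\Sigma_i)$. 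The exponent bookkeeping works out exactly because the escort exponent in the algorithm is the ratio of consecutive partial sums. Finally, the refinement line assembles $U^{(d)}_{W^{(d)}}U^{(d-1)}_{W^{(d-1)}\mid W^{(d)}}\cdots U^{(1)}_{W^{(1)}\mid W^{(2)}}$, which is exactly the factorization of the backward-induction optimum, so $P^\star_W$ is the maximizer.

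Uniqueness is inherited from the single-scale lemma: each fibrewise subproblem has a unique maximizer, and the subproblems are solved in a fixed order, each fed a uniquely determined energy $V_{i-1}$, so the coarsest marginal and every conditional (on fibres of positive probability) are pinned down, determining $P^\star_W$ uniquely as a distribution. The differential-entropy problem~\eqref{Lagrangian differential} is handled verbatim, replacing sums by integrals, the counting measure by Lebesgue absorbed into the base $Q$, and the entropy chain rule by its differential counterpart. The main obstacle I anticipate is not any single display but the justification that the \emph{nested} backward-induction maximization coincides with the joint one: this rests on the claim that, in the factorized parametrization, each finer conditional leaves all coarser entropy terms and coarser marginals invariant, so that the inner maxima may be taken pointwise in the conditioning variable and substituted without loss. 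Verifying this invariance, together with the integrability conditions needed for the free energies $V_i$ and the scaled (escort) distributions to be well defined in the continuous case, is where the care is required.
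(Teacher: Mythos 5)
Your proof is correct, but it reaches Algorithm \ref{RC} by a genuinely different route than the paper. The paper never introduces conditional entropies or value functions: it repeatedly completes the Lagrangian into constants minus relative entropies, alternating the chain rule of relative entropy under a deterministic map (Lemma \ref{Chain rule T}) with the scaled-distribution identity $H(P)-\theta D(P\|Q)=H_{\frac{\theta}{1+\theta}}(Q)-(1+\theta)D\left(P\middle\|(Q)_{\frac{\theta}{1+\theta}}\right)$ (Lemma \ref{Shannon entropy KL sum}), so that after $d$ rounds the objective equals a sum of R\'enyi entropies of the intermediate measures (independent of $P_W$) minus a sum of conditional relative entropies, all of which the output of Algorithm \ref{RC} zeroes simultaneously; existence and uniqueness then fall out at one stroke from $D=0$ iff equality. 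You instead telescope $H_{(\sigma,\mathbf{T})}(W)$ into $\sum_{j}\Sigma_j H\left(W^{(j)}\mid W^{(j+1)}\right)$ and run backward induction with the free-energy recursion $V_i$; your $V_i$'s are exactly the paper's R\'enyi-entropy constants in disguise (differences of consecutive log-partition functions), and your inductive identification $U^{(i)}\propto\exp\left(V_{i-1}/\Sigma_i\right)$ is a closed form the paper never needs --- a real dividend of your route, since it makes transparent why the escort exponent must be the ratio $\Sigma_{i-1}/\Sigma_i$ of consecutive partial sums and exhibits the optimal value, not just the optimizer. The trade-off appears exactly where you flagged that care is required, namely the continuous case: your telescoping relies on conditional differential entropies over the fibres $T_i^{-1}(\cdot)$, and for general (non-decimation) transformations the identity $h(W^{(i)})=h(W^{(i+1)})+h\left(W^{(i)}\mid W^{(i+1)}\right)$ only holds after disintegrating Lebesgue measure over the fibres, which introduces a $P$-dependent Jacobian (coarea) correction $\E[\log J]$ --- absorbable into the energy since it is linear in $P$, but not ``verbatim'' as claimed. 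The paper's KL-only decomposition sidesteps this entirely, because relative entropy disintegrates cleanly on standard Borel spaces for arbitrary measurable $T_i$, which is precisely why it invokes regular conditional probabilities and Lemma \ref{Chain rule T} rather than any entropy chain rule.
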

Notice that Algorithm \ref{RC} consists of three phases: (I) The initialization with a Gibbs distribution at line 1. (II) A `renormalization group' phase at lines 2--4 in which the degrees of freedom are eliminated one by one to obtain the intermediate distributions $U^{(i)}_{W^{(i)}}$ for all scales $2\leq i\leq d$ in an increasing (coarsening) order. (III) A refinement phase at line 5, in which the desired distribution $P_W^{\star}$ is obtained by concatenating the intermediate distributions along the decreasing (refining) order by conditional distributions. 
As we shall see in the next subsection, Algorithms \ref{Bayesian renormalization} and \ref{MT} also have a similar structure, though the renormalization step will be replaced by \emph{Bayesian renormalization}. This, in turn, will introduce a Bayesian variant of the renormalization group.

\subsection{Multiscale relative entropy minimization}\label{minimum multiscale entropy problem}
Let $f$ be an arbitrary measurable function called the energy. For any $\lambda>0$, and a fixed prior distribution $Q_W$, as mentioned in Section \ref{Introduction section}, if 
\begin{equation}
	\nonumber
    \widehat{P}_W(w)\triangleq \argmin_{P_W}\left\{\E[f(W)]+\lambda D(P_W\|Q_W)\right\},
\end{equation}
where $W\sim P_W$, then
$
    \widehat{P}_W(w)= \frac{\exp\left(-\frac{f(w)}{\lambda}\right)Q(w)}{\int_w \exp\left(-\frac{f(w)}{\lambda}\right)Q(w)\mathrm{d}w} 
$
is the Gibbs--Boltzmann distribution.
Now, consider the following multiscale generalization of the previous problem:
\begin{align}\label{multiscale Gibbs minimizing}
    P^{\star}_W\triangleq \argmin_{P_W}\left\{\E[f(W)]+ \lambda D_{(\sigma, \mathbf{T})}(P_W\|Q_W)\right\}. 
\end{align}
Notice that this is the Lagrangian of the problem of minimizing multiscale relative entropy $D_{(\sigma, \mathbf{T})}(P_W\|Q_W)$ under the mean constraint $\E[f(W)]=\mu$. It was shown in \cite{asadi2020chainingchainrule} that (\ref{multiscale Gibbs minimizing}), in the special case of decimation transformation, also has a unique minimizer which can be characterized with the proposed Marginalize-Tilt (MT) algorithm, restated here as Algorithm \ref{MT}. In this paper, we show that for arbitrary scale transformations, the solution to (\ref{multiscale Gibbs minimizing}) is unique and given with Algorithm \ref{Bayesian renormalization}---a more general version of the MT Algorithm.  
First, the definition of  \emph{tilted distribution} is required which is basically the geometric mean between two distributions:
\begin{definition}[Tilted distribution\footnote{This is also known as \emph{generalized escort distribution} in statistical physics literature; see e.g. \cite{bercher2012simple}.}]
Let $\theta\in [0,1]$.
	If $P$ and $Q$ are distributions on a discrete set $\mathcal{A}$, then for all $a\in \mathcal{A}$, we define the tilted distribution $(P,Q)_{\theta}$ with
\begin{equation}
	\nonumber
	(P,Q)_{\theta}(a)= \frac{P^{\theta}(a)Q^{1-\theta}(a)}{\sum_{x\in \mathcal{A}}P^{\theta}(x)Q^{1-\theta}(x)}.
\end{equation}
For continuous random variables it is defined analogously except by replacing the sum in the denominator with an integral.
\end{definition}
Define the Gibbs distribution at the finest (microscopic) scale as  
\begin{equation}
	P^{\textrm{Gibbs}}_W(w) \triangleq \frac{\exp\left({-\frac{f(w)}{\lambda\sigma_1}}\right)Q(w)}{\int_w \exp\left({-\frac{f(w)}{\lambda\sigma_1}}\right)Q(w)\textrm{d}w},\label{microscopic Gibbs dist}
\end{equation}
which would be the minimizer of (\ref{multiscale Gibbs minimizing}) had we had $\sigma_2 = \dots = \sigma_{d}=0$. Algorithm \ref{Bayesian renormalization} receives the microscopic Gibbs distribution, the prior distribution $Q_W$, and the values of $\sigma_2, \dots, \sigma_{d}$ as its input and outputs the desired multiscale Gibbs distribution $P^{\star}_W$---the minimizer of (\ref{multiscale Gibbs minimizing}).
\begin{algorithm}
    \caption{}
    \label{Bayesian renormalization}
    \begin{algorithmic}[1]      
                 \State $U^{(1)}_{W^{(1)}}\gets P^{\mathrm{Gibbs}}_{W}$ \Comment{Initial microscopic Gibbs distribution}\label{initialization of MT}
                 \For{$i=2 \texttt{ to } d$}
        	 	 \State $M^{(i)}_{W^{(i)}}\gets T_{i-1}\left(U^{(i-1)}_{W^{(i-1)}}\right)$ \Comment{Coarse-graining }
        	 	 \State $U^{(i)}_{W^{(i)}}\gets \left(M^{(i)}_{W^{(i)}},Q_{W^{(i)}}\right)_{\frac{\sigma_1+\dots+\sigma_{i-1}}{\sigma_{1}+\dots +\sigma_i}}$ \Comment{Bayesian renormalization}
     			 \EndFor
            \State \textbf{return} $P^{\star}_{W}= U^{(d)}_{W^{(d)}}U^{(d-1)}_{W^{(d-1)}|W^{(d)}}\dots U^{(1)}_{W^{(1)}|W^{(2)}}$ \Comment{Refinement}
    \end{algorithmic}
\end{algorithm}
\begin{algorithm}
    \caption{Marginalize-Tilt (MT) \cite{asadi2020chainingchainrule}}
    \label{MT}
    \begin{algorithmic}[1]      
                 \State $U^{(1)}_{W_1\dots W_d}\gets P^{\mathrm{Gibbs}}_{W}$ \Comment{Initial microscopic Gibbs distribution}\label{initialization of MT}
                 \For{$i=2 \texttt{ to } d$} 
        	 	 \State $M^{(i)}_{W_1\dots W_{d-i+1}}\gets U^{(i-1)}_{W_1\dots W_{d-i+1}}$ \Comment{Marginalization}
        	 	 \State $U^{(i)}_{W_1\dots W_{d-i+1}}\gets \left(M^{(i)}_{W_1\dots W_{d-i+1}},Q_{W_1\dots W_{d-i+1}}\right)_{\frac{\sigma_{1}+\dots+\sigma_{i-1}}{\sigma_{1}+\dots +\sigma_i}}$ \Comment{Tilting}
     			 \EndFor
            \State \textbf{return} $P^{\star}_{W}= U^{(d)}_{W_1}U^{(d-1)}_{W_2|W_1}\dots U^{(1)}_{W_d|W_1\dots W_{d-1}}$ \Comment{Refinement}
    \end{algorithmic}
\end{algorithm}

\begin{theorem}\label{multiscale relative entropy RG theorem}
	The solution to the maximization problem (\ref{multiscale Gibbs minimizing}) is  unique and is the output of Algorithm \ref{Bayesian renormalization}. For the special case of decimation transformation, Algorithm \ref{Bayesian renormalization} reduces to Algorithm \ref{MT}.
\end{theorem}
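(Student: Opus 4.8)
The plan is to prove the statement by induction on the number of scales $d$, peeling off one scale at a time from the finest to the coarsest, with the chain rule of relative entropy as the only nontrivial tool. Write $s_i\triangleq\sigma_1+\dots+\sigma_i$. Since each $W^{(i+1)}=T_i(W^{(i)})$ is a deterministic function of $W^{(i)}$, the joint law of $(W^{(i)},W^{(i+1)})$ is carried by $W^{(i)}$ alone, so the chain rule gives $D(P_{W^{(i)}}\|Q_{W^{(i)}})=D(P_{W^{(i+1)}}\|Q_{W^{(i+1)}})+D(P_{W^{(i)}|W^{(i+1)}}\|Q_{W^{(i)}|W^{(i+1)}}\mid P_{W^{(i+1)}})$. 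Telescoping this across $i$ and collecting coefficients turns the multiscale divergence into
\[ D_{(\sigma,\mathbf{T})}(P_W\|Q_W)= s_d\,D(P_{W^{(d)}}\|Q_{W^{(d)}})+\sum_{j=1}^{d-1}s_j\,D\!\left(P_{W^{(j)}|W^{(j+1)}}\,\big\|\,Q_{W^{(j)}|W^{(j+1)}}\,\big|\,P_{W^{(j+1)}}\right), \]
so that the objective of \eqref{multiscale Gibbs minimizing} splits into a coarsest-scale divergence plus weighted conditional divergences, while the energy $\E_P[f(W)]$ involves only the finest conditional $P_{W^{(1)}|W^{(2)}}$ and the marginal $P_{W^{(2)}}$. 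The base case $d=1$ is the classical single-scale result recalled just before \eqref{multiscale Gibbs minimizing}, whose minimizer is exactly the microscopic Gibbs distribution \eqref{microscopic Gibbs dist}, which is what Algorithm \ref{Bayesian renormalization} returns when the loop is empty.

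First I would optimize over the finest conditional. Parametrizing $P_W$ through the chain $P_{W^{(d)}}\,P_{W^{(d-1)}|W^{(d)}}\cdots P_{W^{(1)}|W^{(2)}}$ (valid because coarser scales are functions of finer ones, so the refinement reconstructs $P_W$), only the energy term and the $j=1$ summand $s_1\Delta_1=\sigma_1\Delta_1$ involve $P_{W^{(1)}|W^{(2)}}$. For each fixed $w^{(2)}$ this is precisely a single-scale Gibbs problem on the fiber $\{w:T_1(w)=w^{(2)}\}$, whose unique minimizer is the conditional $P^{\mathrm{Gibbs}}_{W^{(1)}|W^{(2)}=w^{(2)}}$ of \eqref{microscopic Gibbs dist}; this is exactly the factor $U^{(1)}_{W^{(1)}|W^{(2)}}$ used on the refinement line of Algorithm \ref{Bayesian renormalization}. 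Substituting it and using the classical closed form for the optimal value leaves a residual problem in the coarser variables with a new finest-scale energy $f^{(2)}(w^{(2)})\triangleq-\lambda\sigma_1\log\E_{Q_{W^{(1)}|W^{(2)}=w^{(2)}}}\!\big[\exp(-\tfrac{f(W^{(1)})}{\lambda\sigma_1})\big]$. Reading off the partial sums shows this residual is itself an instance of \eqref{multiscale Gibbs minimizing} over the $d-1$ scales $W^{(2)},\dots,W^{(d)}$ with transformations $T_2,\dots,T_{d-1}$, energy $f^{(2)}$, and length vector $(s_2,\sigma_3,\dots,\sigma_d)$, whose partial sums are still $s_2,s_3,\dots,s_d$.

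The crux is to verify that the renormalization step of Algorithm \ref{Bayesian renormalization} produces exactly the microscopic Gibbs distribution of this residual problem, keeping the recursion and the algorithm in lock-step. The residual has finest coefficient $s_2$, hence microscopic Gibbs $\propto\exp(-f^{(2)}/(\lambda s_2))\,Q_{W^{(2)}}$. On the other hand, coarse-graining gives $M^{(2)}_{W^{(2)}}=T_1(P^{\mathrm{Gibbs}}_W)\propto Q_{W^{(2)}}\exp(-f^{(2)}/(\lambda\sigma_1))$ directly from the definition of $f^{(2)}$, and tilting against $Q_{W^{(2)}}$ with exponent $\theta_2=s_1/s_2=\sigma_1/s_2$ yields $(M^{(2)},Q_{W^{(2)}})_{\theta_2}\propto Q_{W^{(2)}}\exp(-f^{(2)}/(\lambda s_2))$, matching term-by-term. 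Thus one pass of lines 3--4 re-initializes the recursion; the induction hypothesis on the residual $(d-1)$-scale problem produces $U^{(2)},\dots,U^{(d)}$ and the coarser conditionals, and concatenating them as on the refinement line gives $P^{\star}_W$. Uniqueness propagates: the single-scale Gibbs minimizer on each fiber is unique by strict convexity of relative entropy, and by induction so is the residual minimizer.

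Finally, for the decimation transformation $T_{i-1}$ deletes the last block, so the pushforward $M^{(i)}_{W^{(i)}}=T_{i-1}(U^{(i-1)}_{W^{(i-1)}})$ is simply the marginalization of $U^{(i-1)}$ onto $(W_1,\dots,W_{d-i+1})$; the tilt and refinement steps are unchanged, so Algorithm \ref{Bayesian renormalization} coincides line-by-line with the Marginalize--Tilt Algorithm \ref{MT}, giving the second claim. I expect the main obstacle to be not the algebra above but the measure-theoretic bookkeeping for general (possibly continuous, non-injective) $T_i$: one must justify the disintegration defining $Q_{W^{(i)}|W^{(i+1)}}$ and the fiberwise conditionals, check that $f^{(2)}$ is finite and measurable so the residual stays in the admissible class, and confirm that the telescoped identity together with the interchange $\min_P=\min_{\mathrm{coarse}}\min_{\mathrm{fine}}$ holds with the required integrability. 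These are routine under standard regularity (e.g.\ Polish spaces and $\E_Q[\exp(-f/(\lambda\sigma_1))]<\infty$), but they must be stated to make the induction rigorous.
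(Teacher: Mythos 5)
Your proof is correct, and it reaches the paper's conclusion by a genuinely different organization of the same chain-rule backbone. The paper's proof is a one-shot rewriting of the full Lagrangian: it applies Lemma \ref{Gibbs relative entropy} to absorb the energy into a divergence against $P^{\mathrm{Gibbs}}_W$, then alternates Lemma \ref{Chain rule T} with the tilted-distribution identity of Lemma \ref{Tilted Renyi lemma main}, so that at each stage the two competing divergences $\frac{\sigma_1+\dots+\sigma_{i-1}}{\sigma_1+\dots+\sigma_i}D(P\|M^{(i)})$ and $\frac{\sigma_i}{\sigma_1+\dots+\sigma_i}D(P\|Q_{W^{(i)}})$ merge into a single divergence against the tilted $U^{(i)}$ plus a R\'enyi-divergence constant; the objective becomes a sum of $P_W$-independent R\'enyi divergences plus conditional relative entropies that are set to zero simultaneously, which gives both the minimizer and uniqueness at once. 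You instead telescope the chain rule first, optimize out the finest conditional fiberwise, and introduce an explicit renormalized energy $f^{(2)}(w^{(2)})=-\lambda\sigma_1\log\E_{Q_{W^{(1)}|W^{(2)}=w^{(2)}}}[\exp(-f/(\lambda\sigma_1))]$, verifying directly that coarse-graining followed by tilting with exponent $\sigma_1/(\sigma_1+\sigma_2)$ reproduces the microscopic Gibbs distribution of the residual $(d-1)$-scale problem --- your computation $(M^{(2)},Q_{W^{(2)}})_{\sigma_1/s_2}\propto Q_{W^{(2)}}\exp(-f^{(2)}/(\lambda s_2))$ is exactly right, and it replaces Lemma \ref{Tilted Renyi lemma main} entirely, with the R\'enyi constants absorbed into the optimal fiberwise values. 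What each buys: the paper's identity-based argument is valid for all $P$ simultaneously and yields the optimal value in closed form (the sum of R\'enyi divergences), whereas your recursion makes the renormalization-group analogy literal --- an effective Hamiltonian flowing across scales --- and needs no R\'enyi machinery, at the cost of justifying the $\min$-$\min$ interchange, the fiber-supported parametrization of $P_W$ by its chain of conditionals (the paper needs this too, in its refinement step), and finiteness/measurability of $f^{(2)}$ at every stage; your closing paragraph correctly identifies these as the points requiring the standard-Borel/regular-conditional-probability assumptions the paper also imposes. Your one-line treatment of the decimation case (pushforward under coordinate deletion is marginalization) matches the paper's.
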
 
For a proof, see the Appendix. As per \cite{asadi2020chainingchainrule}, we call $(\lambda\sigma_1,\dots,\lambda\sigma_d)$ the temperature vector of $P^{\star}_W$.

\subsection{Multi-objective optimization viewpoint}\label{multiobjective section}
Notice that when maximizing multiscale entropy under a mean constraint, for different values of length scale coefficients $\sigma= (\sigma_1,\dots, \sigma_d)$ we are finding the set of Pareto optimal points of the multi-objective optimization with the entropies at different scales as the objective functions. Therefore maximum multiscale entropy can also be interpreted as a linear scalarization of a multi-objective optimization problem (see e.g. \cite{hwang2012multiple} for a definition of linear scalarization). Thus, roughly speaking, maximum multiscale entropy distributions maximize entropies at multiple scales simultaneously.  
\section{Maximum multiscale entropy and multivariate Gaussians}\label{Gaussians section}
Here, we show that the MT algorithm is closed on the family of multivariate Gaussian distributions. We also show that the same fact holds for Algorithm \ref{RC} in the special case of decimation transformation.

\begin{theorem}\label{Gaussian closedness}
    Assume that the microscopic Gibbs distribution $P^{\textrm{Gibbs}}_W$ is multivariate Gaussian. Then for decimation transformation, the output of Algorithm \ref{RC} is multivariate Gaussian as well. Furthermore, if the prior $Q_W$ is also multivariate Gaussian, then so is the output of the MT algorithm. In these cases, these algorithms simplify to parameter computations of multivariate Gaussians.
\end{theorem}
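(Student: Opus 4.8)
The plan is to reduce both algorithms, under the decimation transformation, to a short list of elementary operations on densities, and to check that each one maps multivariate Gaussians to multivariate Gaussians with explicitly computable parameters. First I would note that for decimation the coarse-graining step $M^{(i)}_{W^{(i)}}\gets T_{i-1}(U^{(i-1)}_{W^{(i-1)}})$ is exactly marginalization of $U^{(i-1)}$ onto the coordinates $(W_1,\dots,W_{d-i+1})$. Hence Algorithm \ref{RC} and the MT algorithm (Algorithm \ref{MT}) coincide structurally and differ only in the renormalization step, which is a scaling $(\cdot)_\theta$ in the former and a tilting $(\cdot,Q)_\theta$ in the latter, with exponent $\theta=\frac{\sigma_1+\dots+\sigma_{i-1}}{\sigma_1+\dots+\sigma_i}\in(0,1]$. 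It therefore suffices to prove three closure facts and then to handle the refinement step.

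The three closure facts I would establish by direct inspection of the exponent of the Gaussian density are: (i) a marginal of $\mathcal{N}(\mu,\Sigma)$ onto a block of coordinates is Gaussian, with mean and covariance the corresponding subvector and submatrix; (ii) the scaled distribution of $\mathcal{N}(\mu,\Sigma)$ is $\mathcal{N}(\mu,\Sigma/\theta)$, since raising the density to the power $\theta$ multiplies the precision matrix $\Sigma^{-1}$ by $\theta$ and leaves the mean fixed; and (iii) the tilted distribution of $\mathcal{N}(\mu_P,\Sigma_P)$ and $\mathcal{N}(\mu_Q,\Sigma_Q)$ is Gaussian with precision $\theta\Sigma_P^{-1}+(1-\theta)\Sigma_Q^{-1}$ and mean $\left(\theta\Sigma_P^{-1}+(1-\theta)\Sigma_Q^{-1}\right)^{-1}\left(\theta\Sigma_P^{-1}\mu_P+(1-\theta)\Sigma_Q^{-1}\mu_Q\right)$, obtained by completing the square in the convex combination of the two quadratic log-densities. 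In each case the normalizing constants are irrelevant because the operations renormalize; the only routine care is to verify that the resulting precision matrices are positive definite, which holds since they are positive combinations of positive definite matrices.

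Given these facts, a straightforward induction shows every intermediate distribution is Gaussian: $U^{(1)}=P^{\mathrm{Gibbs}}$ is Gaussian by hypothesis, and if $U^{(i-1)}$ is Gaussian then $M^{(i)}$ is Gaussian by (i), and $U^{(i)}$ is Gaussian by (ii) for Algorithm \ref{RC}, or by (iii) for the MT algorithm provided the prior $Q_{W^{(i)}}$ is Gaussian (which holds when $Q_W$ is Gaussian, since $Q_{W^{(i)}}$ is then one of its marginals). Note that Algorithm \ref{RC} never invokes the prior in its renormalization, which is why it requires only that $P^{\mathrm{Gibbs}}$ be Gaussian.

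The remaining step, which I expect to be the main obstacle, is the refinement $P^\star_W=U^{(d)}_{W_1}U^{(d-1)}_{W_2|W_1}\cdots U^{(1)}_{W_d|W_1\dots W_{d-1}}$. Each factor is a conditional extracted from one of the Gaussian distributions $U^{(j)}$; by the standard Gaussian conditioning formulas such a conditional is \emph{linear-Gaussian}, i.e. its mean is affine in the conditioning coordinates and its covariance is constant (independent of their values). I would then invoke the elementary fact that a vector whose leading block is Gaussian and each of whose successive conditionals is linear-Gaussian is jointly multivariate Gaussian: the joint log-density is a sum of quadratic forms in its arguments, hence a single negative-definite quadratic, so $P^\star_W$ is Gaussian. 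Reading off the joint mean and covariance from the chained conditional parameters then yields the claimed reduction to parameter computations, completing the proof. The subtlety to be careful about here is that the factors originate from \emph{different} intermediate distributions $U^{(j)}$ and need not be mutually consistent; this is harmless, since the argument uses only that each factor separately has the linear-Gaussian form, which is all that is required for the chained density to be jointly Gaussian.
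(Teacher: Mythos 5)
Your proposal is correct and takes essentially the same route as the paper: the same three closure facts (Gaussian marginalization, scaling, and tilting, the latter two via completing the square) drive an induction through the renormalization-group phase, and your one-shot observation that a Gaussian leading block chained with linear-Gaussian conditionals has a quadratic log-density is just the iterated form of the paper's Proposition~\ref{Gaussian concatenation}, which is proved pairwise by the same exponent computation. Your closing caveat about the conditionals originating from mutually inconsistent intermediate distributions $U^{(j)}$ is precisely the point that proposition is designed to absorb, so there is no gap.
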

\noindent
For a precise proof, see the Appendix. A proof sketch is as follows: Based on a well-known property of multivariate Gaussians, marginalizing out some of its random variables keeps the distribution as multivariate Gaussian. Also, scaling a Gaussian or tilting it towards another Gaussian keeps the resultant distribution as Gaussian. 
Therefore, the renormalization group phase of Algorithms \ref{RC} and \ref{MT} keep all the distributions as Gaussians. Hence, all the intermediate distributions $U^{(i)}_{W_1\dots W_{d-i+1}}$ are multivariate Gaussians. The proof is complete by repeatedly applying the following proposition in the refinement phase, which states that concatenating two Gaussians with conditional distribution results in another Gaussian distribution: 
\begin{proposition}[Gaussian concatenation]\label{Gaussian concatenation}
Let $U_{W_1}^{(1)}$ and $U_{W_1W_2}^{(2)}$ be multivariate Gaussian distributions. Then $P_{W_1W_2}\stackrel{\Delta}{=}U_{W_1}^{(1)}U_{W_2|W_1}^{(2)}$ is multivariate Gaussian as well.
  \end{proposition}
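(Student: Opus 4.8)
The plan is to reduce the statement to the standard fact that an affine image of a Gaussian vector is Gaussian, by recognizing the concatenation as a linear-Gaussian model. Write $U^{(1)}_{W_1}=\mathcal{N}(\mu,\Sigma)$, and let $U^{(2)}_{W_1W_2}$ have mean $(\eta_1,\eta_2)$ and block covariance with entries $\Gamma_{11},\Gamma_{12},\Gamma_{21},\Gamma_{22}$. The first step is to extract the conditional $U^{(2)}_{W_2\mid W_1}$ via the classical Gaussian conditioning formula: assuming $\Gamma_{11}$ is invertible, this conditional is Gaussian with mean affine in $w_1$ and with a constant (i.e.\ $w_1$-independent) covariance,
\begin{equation}
\nonumber
U^{(2)}_{W_2\mid W_1=w_1}=\mathcal{N}\!\left(A w_1+b,\ C\right),\qquad A=\Gamma_{21}\Gamma_{11}^{-1},\quad b=\eta_2-A\eta_1,\quad C=\Gamma_{22}-\Gamma_{21}\Gamma_{11}^{-1}\Gamma_{12}.
\end{equation}

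The second step is to read off the structure of $P_{W_1W_2}=U^{(1)}_{W_1}U^{(2)}_{W_2\mid W_1}$. By construction $W_1\sim\mathcal{N}(\mu,\Sigma)$ and $W_2=AW_1+b+\varepsilon$, where $\varepsilon\sim\mathcal{N}(0,C)$ is drawn independently of $W_1$ (independence holds because the conditional covariance $C$ does not depend on $w_1$, so the noise term is statistically decoupled from the conditioning variable). Thus $(W_1,W_2)$ is an affine function of the stacked vector $(W_1,\varepsilon)$, namely
\begin{equation}
\nonumber
\begin{pmatrix}W_1\\ W_2\end{pmatrix}=\begin{pmatrix}I & 0\\ A & I\end{pmatrix}\begin{pmatrix}W_1\\ \varepsilon\end{pmatrix}+\begin{pmatrix}0\\ b\end{pmatrix}.
\end{equation}
Since $W_1$ and $\varepsilon$ are independent Gaussians, their stack is jointly Gaussian, and an affine map of a Gaussian vector is Gaussian; hence $P_{W_1W_2}$ is multivariate Gaussian. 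Reading off the mean and covariance of the affine image then gives the explicit parameters
\begin{equation}
\nonumber
P_{W_1W_2}=\mathcal{N}\!\left(\begin{pmatrix}\mu\\ A\mu+b\end{pmatrix},\ \begin{pmatrix}\Sigma & \Sigma A^{\mathsf T}\\ A\Sigma & A\Sigma A^{\mathsf T}+C\end{pmatrix}\right),
\end{equation}
which simultaneously shows the concatenation reduces to a finite parameter computation, as claimed in Theorem \ref{Gaussian closedness}.

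I do not expect a genuine obstacle here, since the argument is a direct application of two textbook Gaussian facts (conditioning and affine closure). The only point requiring care is \emph{well-definedness} of the conditional in the first step: the formula for $U^{(2)}_{W_2\mid W_1}$ presumes that the $W_1$-marginal of $U^{(2)}_{W_1W_2}$ is nondegenerate, i.e.\ $\Gamma_{11}$ is invertible. In the generic nondegenerate case this is immediate; if one wishes to cover degenerate covariances, the cleanest fix is to replace $\Gamma_{11}^{-1}$ by the Moore--Penrose pseudoinverse and restrict to the support of the $W_1$-marginal, after which the same affine-image argument goes through verbatim. An equivalent route, should one prefer to avoid the conditioning formula, is to multiply the two densities directly and complete the square, verifying that the exponent of $P_{W_1W_2}$ is a single quadratic form in $(w_1,w_2)$; this yields the same covariance blocks but is more computational, so I would present the affine-transformation argument as the main proof.
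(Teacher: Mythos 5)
Your proof is correct, but it takes a genuinely different route from the paper's. The paper parametrizes both Gaussians by their \emph{precision} matrices, writes $U^{(1)}_{W_1}=N(\mu,Q^{-1})$ and gives $U^{(2)}_{W_1W_2}$ the block precision matrix with blocks $A,B,B^T,D$, extracts the conditional $U^{(2)}_{W_2|W_1}$ from the Gaussian conditioning lemma, and then multiplies the two densities explicitly, expanding the exponent and verifying it is a single quadratic form in $(w_1,w_2)$; this is exactly the ``complete the square'' route you mention at the end as the alternative you chose not to pursue. The payoff of the paper's computation is that it reads off the joint precision matrix in closed form,
\begin{equation}
\nonumber
\widehat{Q}=\begin{pmatrix} Q+BD^{-1}B^{T} & B\\ B^{T} & D\end{pmatrix},
\end{equation}
which is the natural parametrization for the surrounding algorithms, since the tilting and renormalization steps multiply Gaussian densities and hence \emph{add} quadratic forms in the exponent. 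Your affine-image argument ($W_2=AW_1+b+\varepsilon$ with $\varepsilon\sim\mathcal{N}(0,C)$ independent of $W_1$, then closure of Gaussians under affine maps) buys something the paper's proof does not: it avoids density manipulations entirely, so it extends painlessly to degenerate covariances via your pseudoinverse remark, whereas the paper's calculation tacitly requires the relevant blocks ($D$, and the densities themselves) to be nondegenerate; it also delivers the covariance parametrization of the result directly. The one point to state a bit more carefully is the construction of $\varepsilon$: independence is not a consequence of the conditional covariance being constant so much as a choice of realization --- one \emph{defines} $\varepsilon\triangleq W_2-AW_1-b$ on a space where $W_1\sim U^{(1)}_{W_1}$ and $W_2\mid W_1$ follows $U^{(2)}_{W_2|W_1}$, and checks that its conditional law given $W_1=w_1$ is $\mathcal{N}(0,C)$ for every $w_1$, which yields independence. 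With that phrasing tightened, your proof is complete and, if anything, slightly more general than the paper's.
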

  \noindent
Proposition \ref{Gaussian concatenation} may not be new, however, we were not able to find it in the literature; for a precise form and proof, see the Appendix.
Note that when the energy function is a definite quadratic function $f(W)=W^TKW$, where $K\succ 0$ is a positive definite matrix, and the prior $Q_W$ is multivariate Gaussian, then based on its definition in Subsection \ref{minimum multiscale entropy problem}, the microscopic Gibbs distribution is a multivariate Gaussian distribution. Hence, based on the previous argument, the multiscale Gibbs distribution $P^{\star}_W$ is multivariate Gaussian as well.

\section{Multiscale entropic regularization of neural networks}\label{neural network section}

Let $\phi$ denote the hyperbolic tangent activation function.   
	 Consider a $d$ layer feedforward (residual) neural network $h_W:\mathcal{X}\to \mathcal{Y}$ with parameters
	$
		W\triangleq ({W}_1,{W}_2,\dots,{W}_d)\in \mathcal{W}=\mathcal{W}_1\times\dots \times \mathcal{W}_d,
	$ 
	where for all $1\leq k\leq d$, ${W}_k\in\mathbb{R}^{m\times m}$, 
	and given input $x$, the relations between the hidden layers $h_0,h_1,\dots,h_d$ and the output layer $h$ are as follows: $h_0\triangleq x, h_{i}\triangleq \sigma\left({W}_ih_{i-1}\right)+h_{i-1}, \textrm{~for all ~} 1\leq i \leq d, h_W(x)\triangleq h_d.$
	Let $\ell$ be the squared loss, that is, for the network with parameters $W$ and for any example $z=(x,y)\in \mathsf{Z}$, we have
	$
		\ell(W,z)\triangleq |h_W(x)-y|_2^2
	$.
	The following assumption is adopted from \cite{asadi2020chainingchainrule}, named as multilevel regularization: $\mathcal{W}_k\triangleq \left\{{W}\in \mathbb{R}^{m\times m}: \|{W}\|_2\leq \frac{1}{d} \right\}$ for all $1\leq k \leq d$, which is similarly used in \cite{hardt2016identity}. Let $S=(z_1,\dots,z_n)\sim \mu^{\otimes n}$ denote the training set in supervised learning. For any $w\in\mathcal{W}$, let 
$
L_{\mu}(w)\triangleq \E[\ell(w,Z)] 
$
denote the statistical (or population) risk of hypothesis $h_w$, where $~ Z\sim \mu$. For a given training set $S$, the empirical risk of hypothesis $h_w$ is defined as 
$
L_{S}(w)\triangleq \frac{1}{n}\sum_{i=1}^n \ell(w,Z_i).
$
The following lemma controls the difference between consecutive hidden layers of the neural network:
\begin{lemma}\label{chaining link distance} 
	For any $2\leq i\leq d$ and all $x\in \mathcal{X}$,
	\begin{equation}
		|h_i(x)-h_{i-1}(x)|_2 \leq \frac{e|x|_2}{d}.\nonumber
	\end{equation}
\end{lemma}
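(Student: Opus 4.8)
The plan is to exploit the residual structure directly. Since $h_i \triangleq \sigma(W_i h_{i-1}) + h_{i-1}$, the increment telescopes to $h_i(x) - h_{i-1}(x) = \sigma(W_i h_{i-1}(x))$, so bounding the layer-to-layer difference reduces to bounding a single activated linear map. First I would record the two elementary facts the argument leans on: the hyperbolic tangent is applied coordinate-wise, is $1$-Lipschitz, and vanishes at the origin, so $|\tanh(t)|\le|t|$ for every $t$ and hence $|\sigma(v)|_2\le|v|_2$ for every vector $v$; and the multilevel spectral constraint $\|W_i\|_2\le 1/d$ gives $|W_i u|_2\le \frac{1}{d}|u|_2$. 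Combining these yields the per-layer estimate
\begin{equation}
\nonumber
|h_i(x)-h_{i-1}(x)|_2 = |\sigma(W_i h_{i-1}(x))|_2 \le |W_i h_{i-1}(x)|_2 \le \frac{1}{d}|h_{i-1}(x)|_2.
\end{equation}

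Next I would control the norm $|h_{i-1}(x)|_2$ by a short induction on the depth. The same two facts give $|h_k(x)|_2 \le |\sigma(W_k h_{k-1}(x))|_2 + |h_{k-1}(x)|_2 \le \left(1+\tfrac{1}{d}\right)|h_{k-1}(x)|_2$, and since $h_0 = x$ this unrolls to $|h_k(x)|_2 \le \left(1+\tfrac{1}{d}\right)^{k}|x|_2$. For the range $2\le i\le d$ appearing in the statement I only need $k=i-1\le d-1$, so $|h_{i-1}(x)|_2 \le \left(1+\tfrac{1}{d}\right)^{d-1}|x|_2$.

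Finally, I would invoke the standard bound $\left(1+\tfrac{1}{d}\right)^{d-1} \le \left(1+\tfrac{1}{d}\right)^{d} \le e$, valid for every integer $d\ge 1$, and substitute back into the per-layer estimate to obtain $|h_i(x)-h_{i-1}(x)|_2 \le \frac{1}{d}\left(1+\tfrac{1}{d}\right)^{d-1}|x|_2 \le \frac{e|x|_2}{d}$, which is exactly the claim.

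I do not expect a genuine obstacle: the argument is a one-line Lipschitz estimate followed by a geometric-growth induction. The only points that require care are (i) making sure the $1$-Lipschitz bound for $\tanh$ is transferred from each coordinate to the full $\ell_2$ norm, which is immediate since squaring and summing preserves the coordinate-wise inequality, and (ii) getting the exponent right so that the constant is exactly $e$ rather than something larger. The appearance of $e$ is precisely the payoff of the depth-normalized constraint $\|W_k\|_2\le 1/d$ interacting with the $d-1$ compounding layers, so this estimate is really where the multilevel regularization assumption earns its keep.
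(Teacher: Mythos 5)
Your proof is correct and follows essentially the same route as the paper: telescoping the residual update to $\sigma(W_i h_{i-1})$, using the $1$-Lipschitzness of $\tanh$ at the origin together with $\|W_i\|_2\le 1/d$, and an induction giving $|h_{i-1}|_2\le \left(1+\tfrac{1}{d}\right)^{i-1}|x|_2\le e\,|x|_2$ (the paper states this intermediate bound as $\exp\left(\frac{i-1}{d}\right)|x|_2$, which is the same estimate via $1+t\le e^t$). Your write-up is if anything slightly more explicit about the coordinate-wise-to-$\ell_2$ transfer and the final $\left(1+\tfrac{1}{d}\right)^{d}\le e$ step, but there is no substantive difference in approach.
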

\begin{proof} 
Since $h_i=\sigma({W}_ih_{i-1})+h_{i-1}$, based on induction on $i$ and the triangle inequality, we have $|h_{i-1}|_2\leq \exp\left(\frac{i-1}{d}\right)|x|_2\leq e|x|_2.$ Therefore
\begin{align}
    \nonumber
    |h_{i}(x)-h_{i-1}(x)|_2 &= |\sigma(W_ih_{i-1})|_2\\
                              &\leq |W_ih_{i-1}|_2 \nonumber\\
                              &\leq \|W_i\|_2 |h_{i-1}|_2 \nonumber\\
                              &\leq \frac{e|x|_2}{d}.\nonumber
\end{align}
\end{proof}
We assume that the instances have bounded norm, namely, $\mathcal{X}=\{x\in \mathbb{R}^m: |x|_2\leq R\}$.
Based on Lemma \ref{chaining link distance} and a similar technique to \cite{asadi2020chainingchainrule}, we can obtain the following multiscale entropic generalization bound:
\begin{theorem}\label{CMI generalization deep nets theorem} 
	Let $P_S\to P_{W|S}\to P_W$. We have the following generalization bound, where $C$ is a constant, $\gamma\triangleq (\gamma_1,\dots,\gamma_d)$ a vector of positive reals, and $Q_W$ a prior distribution:
    \begin{align}
        \E\left[ L_{\mu}(W)\right]  \leq \E\left[ L_S(W)\right]+ \frac{C}{d\sqrt{n}}\inf_{\gamma, Q_W}\sum_{i=1}^{d}\left(\gamma_i D\left(P_{W_1\dots W_{d-i+1}|S}\middle\|Q_{W_1\dots W_{d-i+1}}\middle|P_S\right)+\frac{1}{4\gamma_i} \right). \nonumber
    \end{align}    
\end{theorem}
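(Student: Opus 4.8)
The plan is to establish the bound by a chaining argument along the decimation scales, reducing it to a sum of single-scale, Xu--Raginsky-type information bounds \cite{xu2017information}, one per link of the chain, in the spirit of \cite{asadi2018chaining,asadi2020chainingchainrule}. Write $X_w\triangleq L_S(w)-L_\mu(w)$ for the centered empirical process, so that $\E[L_\mu(W)]-\E[L_S(W)]=-\E[X_W]$, the expectation being over $S\sim\mu^{\otimes n}$ and $W\sim P_{W|S}$. Recall $W^{(i)}\triangleq(W_1,\dots,W_{d-i+1})$; here $h_{W^{(i)}}$ is the residual network in which layers $d-i+2,\dots,d$ are frozen to the identity reference mapping, and since the activation vanishes at the origin each frozen block is a genuine pass-through, so $h_{W^{(1)}}=h_W$. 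Introducing the fixed reference hypothesis $h_{\mathrm{ref}}(x)\triangleq x$ at the coarse end and setting $W^{(d+1)}\triangleq\mathrm{ref}$, I would telescope
\begin{align}
-X_W=-X_{\mathrm{ref}}+\sum_{i=1}^{d}\big(X_{W^{(i+1)}}-X_{W^{(i)}}\big).\nonumber
\end{align}
Since $h_{\mathrm{ref}}$ is fixed, $\E[X_{\mathrm{ref}}]=0$, so the gap equals $\sum_{i=1}^d\E[X_{W^{(i+1)}}-X_{W^{(i)}}]$ and it suffices to bound each increment. The key structural point is that the link-$i$ increment depends on the weights only through $W^{(i)}=(W_1,\dots,W_{d-i+1})$, because $W^{(i+1)}$ is a sub-vector of $W^{(i)}$; this is exactly why the scale-$i$ marginal $Q_{W^{(i)}}$ will be the correct reference and will produce the conditional relative entropy $D(P_{W^{(i)}|S}\|Q_{W^{(i)}}\mid P_S)$ of the statement.

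Next I would bound the subgaussian scale of each increment via Lemma \ref{chaining link distance}. The networks $h_{W^{(i)}}$ and $h_{W^{(i+1)}}$ agree on layers $1,\dots,d-i$ and differ only through the single residual block at layer $d-i+1$, whose contribution has norm at most $eR/d$ by the same estimate as in the proof of Lemma \ref{chaining link distance}; the frozen trailing layers then pass this difference through unchanged, so $|h_{W^{(i)}}(x)-h_{W^{(i+1)}}(x)|_2\le eR/d$ for all $x\in\X$. Because both the predictions and the teacher-generated labels have norm at most $eR$, the squared loss is bounded and locally Lipschitz in the prediction, giving $|\ell(W^{(i)},z)-\ell(W^{(i+1)},z)|\le C'/d$ for every example $z$ and an absolute constant $C'$. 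The increment $X_{W^{(i+1)}}-X_{W^{(i)}}$ is therefore an average of $n$ i.i.d.\ centered terms each bounded by $C'/d$, hence, by Hoeffding's lemma, subgaussian over the draw of $S$ with variance proxy at most $C'^2/(d^2n)$, uniformly in $w^{(i)}$.

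Finally I would apply the Donsker--Varadhan change of measure to each link. For fixed $w^{(i)}$ the increment is centered over $S$, so under the product reference $P_S\otimes Q_{W^{(i)}}$ it is centered and subgaussian with the proxy above; the change of measure to $P_SP_{W^{(i)}|S}$ yields, for every $\lambda_i>0$,
\begin{align}
\E\big[X_{W^{(i+1)}}-X_{W^{(i)}}\big]\le\frac{1}{\lambda_i}\,D\!\left(P_{W^{(i)}|S}\,\middle\|\,Q_{W^{(i)}}\,\middle|\,P_S\right)+\frac{\lambda_i}{2}\cdot\frac{C'^2}{d^2n}.\nonumber
\end{align}
Reparametrizing through $1/\lambda_i=\tfrac{C}{d\sqrt n}\gamma_i$ pulls out the common prefactor $\tfrac{C}{d\sqrt n}$ and turns the two terms into $\gamma_i\,D(\cdots)$ and $\tfrac{1}{4\gamma_i}$ for a suitable absolute constant $C$ (the factor $\tfrac14$ emerging from the arithmetic); summing over $i$ and taking the infimum over the free multipliers $\gamma$ and over the prior $Q_W$, which was arbitrary throughout, gives the claimed bound.

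The step I expect to be the main obstacle is the uniform subgaussian control in the second paragraph: the squared loss is not bounded a priori, and one must combine the norm bounds $|h_w(x)|_2\le eR$ for both student and teacher with Lemma \ref{chaining link distance} to convert the one-layer perturbation into a clean $O(1/d)$ bound on the loss increment that is valid uniformly over $w^{(i)}$ --- this uniformity is precisely what lets the change-of-measure step run with the fixed reference $Q_{W^{(i)}}$ and yield a hypothesis-independent variance proxy. The rest is routine bookkeeping: verifying that frozen blocks act exactly as the identity so that the output difference localizes to a single layer, and checking that the two end links (reference to scale $d$, and scale $1$ to the full network) satisfy the same $O(1/d)$ estimate.
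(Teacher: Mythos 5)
Your proof is correct and follows essentially the same route as the paper's: the same telescoping of the generalization gap across the decimation scales, the same use of Lemma \ref{chaining link distance} to get a uniform $O(1/d)$ per-link subgaussian proxy, and a per-scale change-of-measure bound with the scale-$i$ marginal of the prior. The only cosmetic difference is that you apply Donsker--Varadhan with the arbitrary prior $Q_{W_1\dots W_{d-i+1}}$ directly, whereas the paper first bounds each increment by $\frac{C}{d\sqrt{n}}\sqrt{I(S;W_1,\dots,W_{d-i+1})}$ (via \cite{Russo,xu2017information}) and then passes variationally to the $\gamma_i D\left(\cdot\middle\|\cdot\middle|P_S\right)+\frac{1}{4\gamma_i}$ form; the two derivations are equivalent, and the boundedness of labels you flag as a concern is assumed implicitly by the paper as well.
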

\noindent 
See the Appendix for a proof. For fixed $\gamma=(\gamma_1,\dots, \gamma_d)$ and $Q_W$, and any $z^n\in \mathsf{Z}^n$, let
\begin{equation}\label{multiscale relative entropy minimization}
	P^{\star}_{W|S=z^n}\triangleq \argmin_{P_{W}} \left\{ \E[L_{z^n}(W)]+ \sum_{i=1}^d \sigma_i D\left(P_{W_1\dots W_{d-i+1}}\middle\|Q_{W_1\dots W_{d-i+1}}\right)\right\},
\end{equation}
where $\sigma_i\triangleq \frac{C\gamma_i}{d\sqrt{n}}$ for all $1\leq i \leq d$. Note that (\ref{multiscale relative entropy minimization}) has the same form as (\ref{multiscale Gibbs minimizing}) with the decimation transformation with $\lambda=1$, therefore we can use the MT algorithm to obtain $P^{\star}_{W|S=z^n}$ for any $z^n\in\mathsf{Z}^n$. To obtain \emph{excess risk bounds} from the generalization bound of Theorem \ref{CMI generalization deep nets theorem}, we employ a technique from \cite{xu2017information}: Since $P^{\star}_{W|S=z^n}$ minimizes the expression in (\ref{multiscale relative entropy minimization}), one can obtain excess risk bounds by plugging in a fixed distribution $\widehat{Q}_W$ concentrated around a population risk minimizer $\widehat{w}=(\widehat{w}_1,\dots,\widehat{w}_d)$ and independent from $S$.  
We now can state the main result of this section.
\begin{theorem}\label{DPG thoerem}
Define the data processing gain at scale $i$ by 
\begin{equation}
	\nonumber
	\mathrm{DPG}(i)\triangleq \sqrt{D(\widehat{Q}_{W}\|{Q}_{W})}-\sqrt{D(\widehat{Q}_{W^{(i)}}\|{Q}_{W^{(i)}})}.
\end{equation}
Then the difference between the excess risk bounds of the single-scale Gibbs posterior and the multiscale Gibbs posterior, when each are optimized over their hyper-parameter (temperature) values, is equal to $\frac{C}{d\sqrt{n}}\sum_{i=1}^d \mathrm{DPG}(i)$ and is positive.
\end{theorem}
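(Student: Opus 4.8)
The plan is to turn the multiscale generalization bound of Theorem~\ref{CMI generalization deep nets theorem} into an excess-risk bound by the localization argument of \cite{xu2017information}, carry it out once for the multiscale Gibbs posterior and once for the single-scale one, and subtract. Write $W^\star\sim P^\star_{W|S}$ for the minimizer of (\ref{multiscale relative entropy minimization}), $\widehat{w}$ for a population risk minimizer, and abbreviate $I_i^\star\triangleq D(P^\star_{W_1\dots W_{d-i+1}|S}\|Q_{W_1\dots W_{d-i+1}}|P_S)$. I would decompose
\begin{align}
\E[L_\mu(W^\star)]-L_\mu(\widehat{w}) = \Big(\E[L_\mu(W^\star)]-\E[L_S(W^\star)]\Big) + \Big(\E[L_S(W^\star)]-L_\mu(\widehat{w})\Big).\nonumber
\end{align}
For a fixed temperature vector $\gamma$, the first bracket is controlled by the summand of Theorem~\ref{CMI generalization deep nets theorem} before the infimum, namely by $\frac{C}{d\sqrt n}\sum_i(\gamma_i I_i^\star+\tfrac{1}{4\gamma_i})$. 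For the second bracket I would use that $P^\star_{W|S=z^n}$ minimizes the regularized empirical objective in (\ref{multiscale relative entropy minimization}) for every $z^n$, so replacing it by the fixed, $S$-independent distribution $\widehat{Q}_W$ concentrated at $\widehat{w}$ only increases the objective; taking $\E_S$ and using $\E_S[L_S(w)]=L_\mu(w)$ bounds the empirical term by $\frac{C}{d\sqrt n}\sum_i \gamma_i\big(D(\widehat{Q}_{W^{(i)}}\|Q_{W^{(i)}})-I_i^\star\big)$ up to the concentration error $\E_{\widehat{Q}}[L_\mu(W)]-L_\mu(\widehat{w})$, which vanishes as $\widehat{Q}_W\to\delta_{\widehat{w}}$.

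Adding the two brackets and recalling $\sigma_i=C\gamma_i/(d\sqrt n)$, the crucial observation is that the terms $\gamma_i I_i^\star$ from the generalization bound cancel exactly against the $-\gamma_i I_i^\star$ produced by the minimizing property, leaving
\begin{align}
\E[L_\mu(W^\star)]-L_\mu(\widehat{w})\ \le\ \frac{C}{d\sqrt n}\sum_{i=1}^d\Big(\gamma_i D(\widehat{Q}_{W^{(i)}}\|Q_{W^{(i)}})+\tfrac{1}{4\gamma_i}\Big).\nonumber
\end{align}
Optimizing each $\gamma_i$ (the step referred to in the statement as optimizing over the hyper-parameters) via $\min_{\gamma_i>0}\{\gamma_i a+\tfrac{1}{4\gamma_i}\}=\sqrt{a}$ gives the multiscale excess-risk bound $\frac{C}{d\sqrt n}\sum_{i=1}^d\sqrt{D(\widehat{Q}_{W^{(i)}}\|Q_{W^{(i)}})}$.

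The single-scale Gibbs posterior corresponds to $\sigma=(\sigma_1,0,\dots,0)$: it treats the whole network as a single block and does not coarse-grain, so when the same chaining bound is applied the complexity charged at every one of the $d$ scales is the full-network divergence $D(\widehat{Q}_W\|Q_W)$ rather than the scale-$i$ divergence. Repeating the identical computation then yields the single-scale excess-risk bound $\frac{C}{d\sqrt n}\sum_{i=1}^d\sqrt{D(\widehat{Q}_W\|Q_W)}=\frac{C}{\sqrt n}\sqrt{D(\widehat{Q}_W\|Q_W)}$. Subtracting the multiscale bound gives exactly $\frac{C}{d\sqrt n}\sum_{i=1}^d\big(\sqrt{D(\widehat{Q}_W\|Q_W)}-\sqrt{D(\widehat{Q}_{W^{(i)}}\|Q_{W^{(i)}})}\big)=\frac{C}{d\sqrt n}\sum_{i=1}^d\mathrm{DPG}(i)$, which is the claimed identity.

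Positivity then follows from the data-processing inequality: since $W^{(i)}=(W_1,\dots,W_{d-i+1})$ is a deterministic coarse-graining of $W$, we have $D(\widehat{Q}_{W^{(i)}}\|Q_{W^{(i)}})\le D(\widehat{Q}_W\|Q_W)$, so every $\mathrm{DPG}(i)\ge 0$ (with $\mathrm{DPG}(1)=0$ since $W^{(1)}=W$), and the sum is strictly positive as soon as one coarsening is lossy, as in the teacher--student construction of Subsection~\ref{TeacherStudent section}. I expect the main obstacle to be the bookkeeping that makes the cross-term cancellation exact: one must align the $\gamma_i$ in the generalization bound with the $\sigma_i$ defining the posterior through $\sigma_i=C\gamma_i/(d\sqrt n)$, and justify that the concentration error $\E_{\widehat{Q}}[L_\mu(W)]-L_\mu(\widehat{w})$ can be driven to zero (or absorbed) under the boundedness of $\X$ and the Lipschitz control of Lemma~\ref{chaining link distance}, so that both bounds are genuinely optimized over the same family of temperatures before the subtraction.
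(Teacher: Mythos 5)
Your proposal is correct and follows essentially the same route as the paper's proof: bound the risk of each Gibbs posterior via its minimizing property by substituting the $S$-independent comparator $\widehat{Q}_W$ (your explicit cancellation of the $\gamma_i I_i^\star$ terms is exactly the step the paper performs when it replaces $P^\star_{W|S}$ by $\widehat{Q}_W$ in the regularized objective), optimize each temperature via $\min_{\gamma>0}\{\gamma a + \tfrac{1}{4\gamma}\}=\sqrt{a}$, and subtract the resulting bounds, with positivity from data processing since each $W^{(i)}$ is a deterministic function of $W$. Your observation that $\mathrm{DPG}(1)=0$ and that strict positivity requires a lossy coarse-graining is in fact slightly more careful than the paper's statement.
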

Hence we can always guarantee a tighter excess risk for the multiscale Gibbs posterior than for the single-scale Gibbs posterior. For example, if the weights of the network take discrete values, then we can take $\widehat{Q}_W$ to be the Dirac delta measure on $\widehat{w}=(\widehat{w}_1,\dots,\widehat{w}_d)$. In this case, for any prior distribution $Q_W$, there exists $\gamma=(\gamma_1,\dots,\gamma_d)$ such that 
\begin{equation}
	\nonumber
	\E\left[ L_{\mu}(W)\right] - \inf_{w\in\W}L_{\mu}(w)\leq \frac{C}{d\sqrt{n}}\sum_{i=1}^d \sqrt{\log \frac{1}{Q_{W_1\dots W_i}(\widehat{w}_1, \dots ,\widehat{w}_i)}}.
\end{equation}
However, the excess risk bound for the single-scale Gibbs distribution when optimized over its temperature parameter is
\begin{equation}
	\nonumber
	\E\left[ L_{\mu}(W)\right] - \inf_{w\in\W}L_{\mu}(w)\leq\frac{C}{\sqrt{n}}\sqrt{\log \frac{1}{Q_{W_1\dots W_d}(\widehat{w}_1, \dots ,\widehat{w}_d)}}.
\end{equation}
The difference between the right sides of these bounds is given by Theorem \ref{DPG thoerem}.
For a precise proof of Theorem \ref{DPG thoerem} and an example when the synaptic weights take continuous values, see the Appendix.
  
\subsection{Teacher-Student example}\label{TeacherStudent section}
A teacher-student scenario, first studied in \cite{saad1995line}, has the advantage of facilitating the evaluation of $\mathrm{DPG}(i)$.
Let data be generated from a teacher residual network with depth ${d}/{M}$, where $M>1$. This is equivalent to a depth $d$ teacher network with identity mappings at the first $d(1-\frac{1}{M})$ layers. Hence $\inf_{w\in \mathcal{W}}L_{\mu}(w)=0$, and we choose $\widehat{w}$ as the weights of the teacher network. Assume an i.i.d. Gaussian prior $Q_{W_1\dots W_d}$ centered at zero.
Hence 
 $
     q_1=Q_{W_1}(\widehat{w}_1)\approx\dots \approx Q_{W_{d(1-\frac{1}{M})}}(\widehat{w}_{d(1-\frac{1}{M})})
 $
 and assume
 $
     q_2=Q_{W_{d(1-\frac{1}{M})+1}}(\widehat{w}_{d(1-\frac{1}{M})+1})\approx\dots \approx Q_{W_d}(\widehat{w_d}),
 $ 
 where $q_1\gg q_2$. We show in the Appendix that $\sum_{i=1}^d \mathrm{DPG}(i)\approx (\log \frac{1}{q_2})^{\frac{1}{2}}d^{\frac{3}{2}}(\frac{M-\frac{2}{3}}{M^{\frac{3}{2}}}),$ which quantifies the improvement gap. 

\subsection{Experiment}\label{Experiment section}
\begin{figure}
	\begin{center}
    \includegraphics[width=0.7\textwidth]{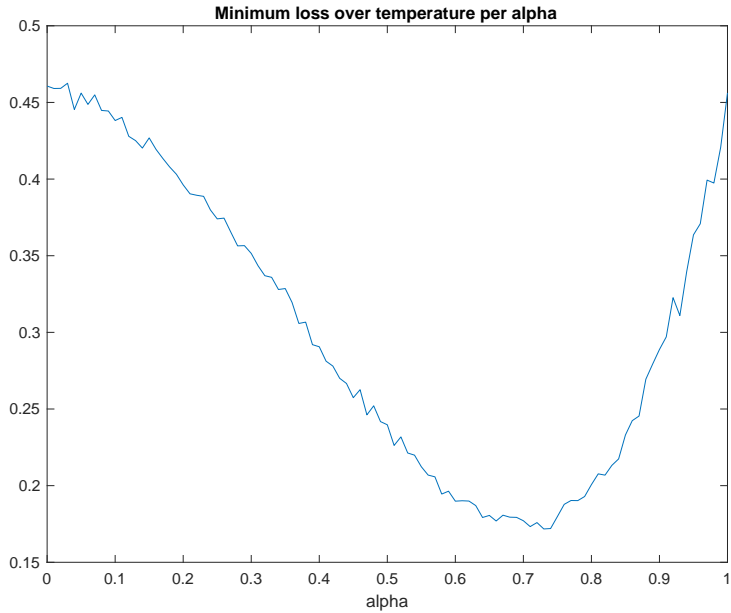}
  \end{center}
  \caption{
  Minimum loss over temperature per $\alpha$. The vertical axis denotes the population risk of a hypothesis randomly chosen by the multiscale Gibbs posterior. Note that $\alpha=0$ corresponds to the single-scale Gibbs distribution, and $\alpha\to 1$ corresponds to learning with random features.}
  \label{min loss per alpha}
\end{figure}
Assume that the temperature vector of the multiscale Gibbs posterior $\sigma=(\sigma_1,\dots,\sigma_d)$ is such that $\sigma_1$ takes arbitrary positive values and the rest of the parameters are determined inductively with the following equations:
$
    \frac{\sigma_i}{\sigma_1+\dots+\sigma_i}=\alpha\in[0,1)$ for all $2\leq i\leq d.
$
Hence, the tilting indices in the MT algorithm are all equal to $\alpha$ and we can represent the temperature vector $\lambda\in \mathbb{R}^d$ with just two positive parameters $(\alpha, \sigma_1)$. Notice that when $\alpha=0$, then the multiscale Gibbs distribution is simply equivalent to the single-scale Gibbs distribution. Moreover, the case $\alpha\to 1$ corresponds to sampling the first $d-1$ layers randomly from the prior distribution, and only training the last layer, a condition similar to random feature learning \cite{rahimi2008random}. In the following experiment, assume that we have a teacher network and a student network. For different values of $\alpha\in[0,1)$, we minimize the performance of the algorithm over different values of the temperature $\sigma_1$. We use the Gauss--Newton matrix at the origin to obtain Gaussian approximations to the microscopic Gibbs distribution, then use Theorem \ref{Gaussian closedness}.  See Figure \ref{min loss per alpha}. Notice that there exist intermediate values for $\alpha$ for which the population risk is much better than extreme cases of $\alpha=0$ and $\alpha\to 1$. For more details, see the Appendix. 

\section*{Acknowledgement}
Amir Asadi thanks Siddhartha Sarkar for useful discussions on renormalization group theory.

\appendix

\section{Proofs for Section \ref{Maximum multiscale entropy section}}
Here, we present the proofs of maximum multiscale entropy results.
\subsection{Multiscale Shannon and differential entropy maximization}
For the proof of Theorem \ref{Max multiscale shannon entropy} we first require the following lemmas. The first lemma is used for proving the optimality of the Gibbs distribution for maximizing Shannon entropy:
\begin{lemma}\label{Gibbs is the maximizer entropy}
  Let $f: {\cal A}\rightarrow \mathbb{R}$ be such that $\sum_{w\in {\cal A}}\exp(-f(w))<\infty$, where ${\cal A}$ is a finite or countably infinite set. Then for any $P_{W}$ defined on ${\cal A}$,
  \begin{equation}
      \nonumber
      H(W)-\lambda\mathbb{E}[f(W)]= -D\left(P_{W}\middle\|{P}_{W}^{\rm{Gibbs}}\right)+\log \left(\sum_{w\in {\cal A}}\exp(-\lambda f(w))\right),
  \end{equation}
  where 
  $${P}_{W}^{\rm{Gibbs}}(w)\triangleq\frac{\exp(-\lambda f(w))}{\sum_{w\in {\cal A}}\exp(-\lambda f(w))}, \quad  w\in {\cal A},$$ is the Gibbs--Boltzmann distribution.
\end{lemma}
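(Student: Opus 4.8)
The plan is to prove this as a pure algebraic identity rather than through any variational or optimization argument; the ``maximizer'' interpretation that the main text invokes then drops out for free from the non-negativity of relative entropy. The single fact that drives the whole computation is the defining formula for the Gibbs distribution. Writing $Z \triangleq \sum_{w\in\mathcal{A}}\exp(-\lambda f(w))$ for the partition function, which is finite so that $P_W^{\mathrm{Gibbs}}$ is a well-defined probability distribution, I would first take logarithms to obtain $\log P_W^{\mathrm{Gibbs}}(w) = -\lambda f(w) - \log Z$, i.e. the rearrangement $-\lambda f(w) = \log P_W^{\mathrm{Gibbs}}(w) + \log Z$. This is the one substitution everything else hinges on.

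Next I would feed this into the left-hand side. Expanding the definitions gives $H(W) - \lambda\mathbb{E}[f(W)] = -\sum_w P_W(w)\log P_W(w) + \sum_w P_W(w)\bigl(-\lambda f(w)\bigr)$, and replacing $-\lambda f(w)$ by the expression above yields
\[
H(W) - \lambda\mathbb{E}[f(W)] = -\sum_w P_W(w)\log P_W(w) + \sum_w P_W(w)\log P_W^{\mathrm{Gibbs}}(w) + \log Z \sum_w P_W(w).
\]
The first two sums combine into $-\sum_w P_W(w)\log\frac{P_W(w)}{P_W^{\mathrm{Gibbs}}(w)} = -D\bigl(P_W\,\|\,P_W^{\mathrm{Gibbs}}\bigr)$, while the final sum equals $\log Z$ because $P_W$ is a probability distribution and $\sum_w P_W(w) = 1$. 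This is exactly the claimed identity, so no further work is needed for the equation itself.

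The main thing to watch—more a matter of careful bookkeeping than a genuine obstacle—is convergence and the handling of possibly infinite quantities when $\mathcal{A}$ is countably infinite. I would note that if $D\bigl(P_W\,\|\,P_W^{\mathrm{Gibbs}}\bigr) = +\infty$ the identity still holds in the extended-real sense, and that otherwise the series are absolutely convergent (or sign-definite termwise), which legitimizes splitting and reordering them. Once the identity is in hand, the optimality statement used elsewhere is immediate: since $D\bigl(P_W\,\|\,P_W^{\mathrm{Gibbs}}\bigr) \ge 0$ with equality if and only if $P_W = P_W^{\mathrm{Gibbs}}$, the functional $H(W) - \lambda\mathbb{E}[f(W)]$ is maximized uniquely by the Gibbs--Boltzmann distribution, with maximal value $\log Z$.
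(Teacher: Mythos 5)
Your proof is correct and follows essentially the same route as the paper: both are the same direct algebraic manipulation that absorbs $-\lambda f(w)$ into the logarithm, normalizes by the partition function, and recognizes the resulting sum as $-D\bigl(P_W\,\|\,P_W^{\mathrm{Gibbs}}\bigr)+\log Z$. Your added care about extended-real values and convergence on countably infinite $\mathcal{A}$ is a harmless refinement the paper omits.
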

\begin{proof}
\begin{align}
    H(W)-\lambda\E[f(W)]&=-\sum_{w\in \mathcal{A}}P(w)\log P(w) -\lambda \sum_{w\in \mathcal{A}}f(w)P(w)\nonumber\\
                        & = -\sum_{w\in \mathcal{A}}P(w)\log \frac{P(w)}{\exp(-\lambda f(w))}\\
                        & = -\sum_{w\in \mathcal{A}}P(w)\log \frac{P(w)}{\frac{\exp(-\lambda f(w))}{\sum_{w\in {\cal A}}\exp(-\lambda f(w))}}+ \log \left(\sum_{w\in {\cal A}}\exp(-\lambda f(w))\right)\\
                    &=-D\left(P_{W}\middle\|{P}_{W}^{\rm{Gibbs}}\right)+ \log \left(\sum_{w\in {\cal A}}\exp(-\lambda f(w))\right).
\end{align}
    
\end{proof}
As a corollary of Lemma \ref{Gibbs is the maximizer entropy}, the maximizer of $H(W)-\lambda\mathbb{E}[f(W)]$ is given by the Gibbs distribution ${P}_{W}^{\rm{Gibbs}}$. The counterpart of Lemma \ref{Gibbs is the maximizer entropy} for continuous random variables and differential entropy is as follows:
\begin{lemma}\label{Gibbs is the maximizer diff entropy}
  Let $f: {\cal A}\rightarrow \mathbb{R}$ be such that $\int_{w\in {\cal A}}\exp(-f(w))\mathrm{d}w<\infty$, where ${\cal A}$ is an uncountable set. Then for any $P_{W}$ defined on ${\cal A}$, 
  \begin{equation}
      \nonumber
      h(W)-\lambda\mathbb{E}[f(W)]= -D\left(P_{W}\middle\|{P}_{W}^{\rm{Gibbs}}\right)+\log \left(\int_{w\in {\cal A}}\exp(-\lambda f(w))\mathrm{d}w\right),
  \end{equation}
  where 
  $${P}_{W}^{\rm{Gibbs}}(w)\triangleq\frac{\exp(-\lambda f(w))}{\int_{w\in {\cal A}}\exp(-\lambda f(w))\mathrm{d}w}, \quad w\in {\cal A},$$ is the Gibbs--Boltzmann distribution.
\end{lemma}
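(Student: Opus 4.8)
The plan is to mirror the proof of Lemma~\ref{Gibbs is the maximizer entropy} essentially verbatim, replacing the sums over the countable set by integrals over the uncountable set $\mathcal{A}$. First I would write out the two terms explicitly, namely $h(W) = -\int_{w\in\mathcal{A}} P(w)\log P(w)\,\mathrm{d}w$ and $\lambda\mathbb{E}[f(W)] = \lambda\int_{w\in\mathcal{A}} f(w) P(w)\,\mathrm{d}w$, and merge their difference under a single integral:
\[
h(W) - \lambda\mathbb{E}[f(W)] = -\int_{w\in\mathcal{A}} P(w)\log\frac{P(w)}{\exp(-\lambda f(w))}\,\mathrm{d}w.
\]

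Next I would introduce the partition function $Z\triangleq \int_{w\in\mathcal{A}}\exp(-\lambda f(w))\,\mathrm{d}w$, which is finite and positive by the integrability hypothesis, so that $P_W^{\mathrm{Gibbs}}(w)=\exp(-\lambda f(w))/Z$ is a genuine probability density. Dividing inside the logarithm by $Z$ and compensating with $+\log Z$ then produces, using $\int_{w\in\mathcal{A}} P(w)\,\mathrm{d}w = 1$,
\[
-\int_{w\in\mathcal{A}} P(w)\log\frac{P(w)}{P_W^{\mathrm{Gibbs}}(w)}\,\mathrm{d}w + \log Z = -D\!\left(P_W\middle\|P_W^{\mathrm{Gibbs}}\right) + \log Z,
\]
which is precisely the claimed identity; the three displayed lines correspond line for line to the three lines of the discrete proof.

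The only point requiring care — and the closest thing to an obstacle — is the well-definedness of the integrals rather than any substantive difficulty. I would note that $Z<\infty$ follows from the hypothesis (observing that $\exp(-\lambda f)$ is integrable whenever $\exp(-f)$ is, after the usual rescaling of $f$ by $\lambda$), that the manipulation presupposes $P$ is absolutely continuous and supported where $P_W^{\mathrm{Gibbs}}$ is positive, and that when $D(P_W\|P_W^{\mathrm{Gibbs}})=+\infty$ the identity still holds with both sides equal to $-\infty$. As a corollary, since $\log Z$ does not depend on $P$ and $D(P_W\|P_W^{\mathrm{Gibbs}})\geq 0$ with equality if and only if $P=P_W^{\mathrm{Gibbs}}$, the maximizer of $h(W)-\lambda\mathbb{E}[f(W)]$ is exactly the Gibbs--Boltzmann distribution, giving the continuous analog of the corollary that follows Lemma~\ref{Gibbs is the maximizer entropy}.
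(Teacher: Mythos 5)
Your proof is correct and is essentially identical to the paper's: the same three-step manipulation of merging the entropy and energy terms under a single integral, dividing inside the logarithm by the partition function $Z=\int_{w\in\mathcal{A}}\exp(-\lambda f(w))\,\mathrm{d}w$ while compensating with $+\log Z$, and recognizing the remaining integral as $-D\left(P_W\middle\|P_W^{\mathrm{Gibbs}}\right)$. The measure-theoretic caveats you add (positivity and finiteness of $Z$, absolute continuity of $P$ with respect to the Gibbs density, and the convention when the divergence is infinite) are sensible refinements of points the paper leaves implicit, not a different route.
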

\begin{proof}
\begin{align}
    h(W)-\lambda\E[f(W)]&=-\int_{w\in \mathcal{A}}P(w)\log P(w)\dw -\lambda \int_{w\in \mathcal{A}}f(w)P(w)\dw\nonumber\\
                        & = -\int_{w\in \mathcal{A}}P(w)\log \frac{P(w)}{\exp(-\lambda f(w))}\dw\\
                        & = -\int_{w\in \mathcal{A}}P(w)\log \frac{P(w)}{\frac{\exp(-\lambda f(w))}{\int_{w\in {\cal A}}\exp(-\lambda f(w))}}\dw+ \log \left(\int_{w\in {\cal A}}\exp(-\lambda f(w))\right)\\
                    &=-D\left(P_{W}\middle\|{P}_{W}^{\rm{Gibbs}}\right)+ \log \left(\int_{w\in {\cal A}}\exp(-\lambda f(w))\right).
\end{align}
    
\end{proof}
Let $H_{\alpha}(P)$ denote the R\'{e}nyi entropy of order $\alpha$ of discrete distribution $P$, which for $\alpha\in (0,1)\cup (1,\infty)$ is defined as 
$$H_{\alpha}(P)\triangleq \frac{1}{1-\alpha}\log \sum_{w\in {\cal A}} P^{\alpha}(w).$$
Similarly, let $h_{\alpha}(P)$ denote the R\'{e}nyi differential entropy of order $\alpha$ of continuous distribution $P$, which for $\alpha\in (0,1)\cup (1,\infty)$ is defined as 
$$H_{\alpha}(P)\triangleq \frac{1}{1-\alpha}\log \int_{w\in {\cal A}} P^{\alpha}(w)\mathrm{d}w.$$
The following two lemmas show how to linearly combine an entropy with a relative entropy, using scaled distributions:
\begin{lemma}\label{Shannon entropy KL sum}
	Let $P$ and $Q$ be two discrete distributions and $\theta \geq 0$. We have 
	\begin{equation}\nonumber
	H(P)-\theta D(P\|Q) = H_{\frac{\theta}{1+\theta}}(Q)-(1+\theta)D\left(P\middle\|(Q)_{\frac{\theta}{1+\theta}}\right).
	\end{equation}
\end{lemma}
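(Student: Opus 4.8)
The plan is to prove the identity by direct expansion of both sides into elementary sums and checking they agree term by term. The whole computation is organized around the substitution $\alpha \triangleq \frac{\theta}{1+\theta}$, which is the order of the R\'enyi entropy appearing on the right. The two algebraic facts I would record at the outset are $1-\alpha = \frac{1}{1+\theta}$, hence $\frac{1}{1-\alpha} = 1+\theta$, and $(1+\theta)\alpha = \theta$. These are exactly the relations that make the coefficients on both sides line up.

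First I would expand the left-hand side. Writing $D(P\|Q) = \sum_w P(w)\log P(w) - \sum_w P(w)\log Q(w) = -H(P) - \sum_w P(w)\log Q(w)$, one gets immediately
\begin{equation}
\nonumber
H(P) - \theta D(P\|Q) = (1+\theta)\, H(P) + \theta \sum_w P(w)\log Q(w).
\end{equation}
This is the target expression I want to recover from the right-hand side.

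Next I would expand the right-hand side. Let $Z \triangleq \sum_x Q(x)^{\alpha}$ denote the normalizer of the scaled distribution, so that $(Q)_{\alpha}(w) = Q(w)^{\alpha}/Z$. The R\'enyi term is then $H_{\alpha}(Q) = \frac{1}{1-\alpha}\log Z = (1+\theta)\log Z$. For the divergence, substituting the explicit form of $(Q)_{\alpha}$ gives
\begin{equation}
\nonumber
D\!\left(P \,\middle\|\, (Q)_{\alpha}\right) = -H(P) + \log Z - \alpha \sum_w P(w)\log Q(w),
\end{equation}
using $\sum_w P(w) = 1$. Multiplying by $(1+\theta)$ and subtracting from $H_{\alpha}(Q)$, the two $(1+\theta)\log Z$ terms cancel, and using $(1+\theta)\alpha = \theta$ the cross term becomes $\theta\sum_w P(w)\log Q(w)$, leaving exactly $(1+\theta)H(P) + \theta\sum_w P(w)\log Q(w)$, which matches the left-hand side.

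There is no real obstacle here beyond bookkeeping: the substance of the lemma is the cancellation of the normalization constant $\log Z$ between the R\'enyi entropy $H_{\alpha}(Q)$ and the divergence $D(P\|(Q)_{\alpha})$. The only point deserving care is to confirm that this cancellation, together with the matching of the $\sum P\log Q$ coefficients, forces precisely the choice $\alpha = \frac{\theta}{1+\theta}$ and scaling factor $1+\theta$; this is what pins down the specific constants in the statement. I would also note in passing that the continuous analogue (Lemma for differential entropy) follows by the identical manipulation with sums replaced by integrals.
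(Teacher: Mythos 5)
Your proposal is correct and is essentially the paper's own argument: both proofs are the same direct algebraic computation, resting on the identity $(1+\theta)\frac{\theta}{1+\theta}=\theta$ and the cancellation of the normalizer $\log Z$ of the scaled distribution against the R\'enyi entropy term $H_{\frac{\theta}{1+\theta}}(Q)=(1+\theta)\log Z$. The only difference is presentational---the paper massages the left side into the right in one chain of equalities, while you expand both sides to the common form $(1+\theta)H(P)+\theta\sum_w P(w)\log Q(w)$---and your closing remark that the differential-entropy analogue follows by replacing sums with integrals matches the paper's Lemma \ref{differential entropy proposition} as well.
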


\begin{proof}
  \begin{align*}
  H(P)-\theta D(P\|Q)
    &=-\sum P(w)\log P(w)  -\theta\sum P(w)\log\frac{P(w)}{Q(w)}\\
  &=-\sum P(w)\log\frac{P(w)^{1+\theta}}{Q(w)^{\theta}} \\
  &=-(1+\theta)\sum P(w)\log\frac{P(w)}{Q(w)^{\frac{\theta}{1+\theta}}} \\
  &=H_{\frac{\theta}{1+\theta}}(Q)-(1+\theta)D\left(P\middle\|(Q)_{\frac{\theta}{1+\theta}}\right).
    \end{align*}
\end{proof}

\begin{lemma}\label{differential entropy proposition}
  Let $P$ and $Q$ be two continuous distributions and $\theta \geq 0$. Then
  \begin{align*}
  h(P)-\theta D(P\|Q) =h_{\frac{\theta}{1+\theta}}(Q)-(1+\theta)D\left(P\middle\|(Q)_{\frac{\theta}{1+\theta}}\right).
\end{align*}
\end{lemma}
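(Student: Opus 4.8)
The plan is to mirror the discrete computation of Lemma \ref{Shannon entropy KL sum} essentially verbatim, replacing every sum by an integral, since the identity is purely algebraic once one works inside the integrand. First I would substitute the definitions $h(P)=-\int P(w)\log P(w)\,\mathrm{d}w$ and $D(P\|Q)=\int P(w)\log\frac{P(w)}{Q(w)}\,\mathrm{d}w$ and merge the two logarithms into one, obtaining
\[
h(P)-\theta D(P\|Q)=-\int P(w)\log\frac{P(w)^{1+\theta}}{Q(w)^{\theta}}\,\mathrm{d}w.
\]

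Next I would pull the factor $(1+\theta)$ out of the logarithm, rewriting the integrand as $-(1+\theta)\,P(w)\log\frac{P(w)}{Q(w)^{\theta/(1+\theta)}}$, which is where the exponent $\frac{\theta}{1+\theta}$ enters. The remaining step is to recognize the scaled (escort) distribution: setting $Z\triangleq\int Q(x)^{\theta/(1+\theta)}\,\mathrm{d}x$, one has $Q(w)^{\theta/(1+\theta)}=Z\,(Q)_{\frac{\theta}{1+\theta}}(w)$, so peeling off the constant $\log Z$ splits the expression into $-(1+\theta)D\big(P\big\|(Q)_{\frac{\theta}{1+\theta}}\big)+(1+\theta)\log Z$, where I have used $\int P(w)\,\mathrm{d}w=1$. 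Finally, since $\frac{1}{1-\alpha}=1+\theta$ for $\alpha=\frac{\theta}{1+\theta}$, the constant $(1+\theta)\log Z$ is precisely the R\'enyi differential entropy $h_{\frac{\theta}{1+\theta}}(Q)$, and the claimed identity follows.

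There is no genuine obstacle in the algebra; the only care needed is in the analytic justification, which is where the continuous case differs from the discrete one. I would assume throughout that the relevant integrals converge---in particular that the normalizing constant $Z=\int Q^{\theta/(1+\theta)}$ is finite so that $(Q)_{\frac{\theta}{1+\theta}}$ is a well-defined distribution, and that $h(P)$ and $D(P\|Q)$ are each finite so that separating the integral into a relative-entropy term and a constant term is legitimate and no $\infty-\infty$ cancellation is hidden. These are the same tacit regularity conditions under which the lemma is stated, so once they are in force the identity follows from the chain of equalities above exactly as in the discrete proof of Lemma \ref{Shannon entropy KL sum}.
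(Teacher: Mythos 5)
Your proposal is correct and follows essentially the same route as the paper's proof: merge the two logarithms, factor out $(1+\theta)$, and identify the normalizing constant of the scaled distribution with the R\'enyi differential entropy $h_{\frac{\theta}{1+\theta}}(Q)$. In fact you spell out the final identification (via $Z=\int Q^{\theta/(1+\theta)}$ and $\frac{1}{1-\alpha}=1+\theta$) and the tacit finiteness assumptions more explicitly than the paper does, which only strengthens the argument.
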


\begin{proof}
  \begin{align*}
  h(P)-\theta D(P\|Q)
    &=-\int P(w)\log P(w) \mathrm{d}w -\theta\int P(w)\log\frac{P(w)}{Q(w)}\mathrm{d}w\\
  &=-\int P(w)\log\frac{P(w)^{1+\theta}}{Q(w)^{\theta}} \mathrm{d}w\\
  &=-(1+\theta)\int P(w)\log\frac{P(w)}{Q(w)^{\frac{\theta}{1+\theta}}}\mathrm{d}w \\
  &=h_{\frac{\theta}{1+\theta}}(Q)-(1+\theta)D\left(P\middle\|(Q)_{\frac{\theta}{1+\theta}}\right).
    \end{align*}
\end{proof}

For simplicity of the proofs, we assume that all alphabets are standard Borel spaces, which guarantees the existence of regular conditional probabilities and reverse random transformations. Therefore, as a corollary of the chain rule of relative entropy, we have the following:
\begin{lemma}\label{chain rule based}
Let $P_{W_1}\to T_{W_2|W_1} \to P_{W_2}$ and $Q_{W_1} \to T_{W_2|W_1} \to Q_{W_2}$. Then
    \begin{equation}
        \nonumber
        D(P_{W_1}\|Q_{W_1}) = D(P_{W_2}\|Q_{W_2}) + D(P_{W_1|W_2}\|Q_{W_1|W_2}|P_{W_2}).
    \end{equation}
\end{lemma}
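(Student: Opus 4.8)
The plan is to recognize this statement as the chain rule of relative entropy applied twice, in two different factorization orders, where the hypothesis that the \emph{same} forward transformation $T_{W_2|W_1}$ generates $W_2$ under both $P$ and $Q$ is exactly what makes one of the two conditional terms vanish. First I would build the two joint laws $P_{W_1W_2}\triangleq P_{W_1}T_{W_2|W_1}$ and $Q_{W_1W_2}\triangleq Q_{W_1}T_{W_2|W_1}$ on the product of the two alphabets; these are well defined because $T_{W_2|W_1}$ is a common Markov kernel, and by construction $P_{W_2|W_1}=Q_{W_2|W_1}=T_{W_2|W_1}$ almost surely.

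Next I would invoke the general (measure-theoretic) chain rule of relative entropy, valid on standard Borel spaces, in its $W_1$-first form,
$$D(P_{W_1W_2}\|Q_{W_1W_2}) = D(P_{W_1}\|Q_{W_1}) + D\left(P_{W_2|W_1}\middle\|Q_{W_2|W_1}\middle|P_{W_1}\right).$$
The crucial observation is that the two joints are pushed through the identical kernel, so the conditional term is zero and $D(P_{W_1W_2}\|Q_{W_1W_2})=D(P_{W_1}\|Q_{W_1})$. Then I would expand the same joint divergence in the $W_2$-first order,
$$D(P_{W_1W_2}\|Q_{W_1W_2}) = D(P_{W_2}\|Q_{W_2}) + D\left(P_{W_1|W_2}\middle\|Q_{W_1|W_2}\middle|P_{W_2}\right),$$
where the reverse conditionals $P_{W_1|W_2}$ and $Q_{W_1|W_2}$ exist by the standing assumption. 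Equating the two expressions for $D(P_{W_1W_2}\|Q_{W_1W_2})$ immediately yields the claimed identity.

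The routine part is the two chain-rule expansions; the only genuine care needed is measure-theoretic. The main obstacle is ensuring everything is well defined when $T_i$ is deterministic (as in decimation), because then the forward kernel is degenerate and the reverse conditional $P_{W_1|W_2}$ is the nontrivial object in the decomposition. The standing assumption that all alphabets are standard Borel spaces resolves this: it guarantees regular conditional probabilities and the existence of the reverse random transformation, so both chain-rule identities hold verbatim with no absolute-continuity pathology (and if $D(P_{W_1}\|Q_{W_1})=\infty$ the identity holds trivially in the extended reals).
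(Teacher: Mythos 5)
Your proof is correct and follows essentially the same route as the paper: both expand the joint divergence $D(P_{W_1W_2}\|Q_{W_1W_2})$ via the chain rule in the two factorization orders and note that $D(P_{W_2|W_1}\|Q_{W_2|W_1}|P_{W_1})=0$ because both joints use the common kernel $T_{W_2|W_1}$. Your added care about constructing the joints explicitly, the standard Borel assumption, and the infinite-divergence case makes the measure-theoretic underpinnings (which the paper relegates to a standing assumption) explicit, but does not change the argument.
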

\begin{proof}
Expanding $D(P_{W_1W_2}\|Q_{W_1W_2})$ in two different ways based on the chain rule of relative entropy gives
\begin{align}
    D(P_{W_1W_2}\|Q_{W_1W_2}) &= D(P_{W_2}\|Q_{W_2}) + D(P_{W_1|W_2}\|Q_{W_1|W_2}|P_{W_2})\\
                              &= D(P_{W_1}\|Q_{W_1}) + D(P_{W_2|W_1}\|Q_{W_2|W_1}|P_{W_1}).
\end{align}
The conclusion follows from noting that $D(P_{W_2|W_1}\|Q_{W_2|W_1}|P_{W_1})=0$.
\end{proof}
As a corollary of Lemma \ref{chain rule based} for deterministic random transformation $T_{W_2|W_1}$, we have the following:
\begin{lemma}\label{Chain rule T} For any function $T$, we have
    \begin{equation}
    \nonumber
        D(P_{W}\|Q_W) = D(P_{T(W)}\|Q_{T(W)})+D(P_{W|T(W)}\|Q_{W|T(W)}|P_{T(W)}).
    \end{equation}
\end{lemma}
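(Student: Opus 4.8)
The plan is to derive this identity as an immediate specialization of Lemma~\ref{chain rule based}, since the general chain-rule statement already contains all the substance; here the transporting channel is merely constrained to be deterministic.

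First I would observe that any measurable function $T$ induces a deterministic random transformation $T_{W_2|W_1}$ via $W_2\triangleq T(W_1)$: the associated Markov kernel places all of its mass on the single point $T(w_1)$ when fed input $w_1$. This is a bona fide channel of the kind to which Lemma~\ref{chain rule based} applies.

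Next I would push both $P_W$ and $Q_W$ through this same deterministic kernel. Under the identification $W_1\leftrightarrow W$ and $W_2\leftrightarrow T(W)$, the output marginals are exactly the pushforward laws $P_{T(W)}$ and $Q_{T(W)}$---the distributions of $T(W)$ when $W\sim P_W$ and when $W\sim Q_W$, respectively. With this substitution, Lemma~\ref{chain rule based} reads
\begin{equation}
\nonumber
D(P_W\|Q_W)=D(P_{T(W)}\|Q_{T(W)})+D(P_{W|T(W)}\|Q_{W|T(W)}|P_{T(W)}),
\end{equation}
which is precisely the claim. The reverse conditional distributions $P_{W|T(W)}$ and $Q_{W|T(W)}$ exist as regular conditional probabilities by the standing assumption that all alphabets are standard Borel spaces.

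The only point demanding a moment of care---and the closest thing to an obstacle---is verifying that the deterministic map genuinely fits the hypotheses of Lemma~\ref{chain rule based}, namely that the \emph{same} kernel $T_{W_2|W_1}$ transports both $P$ and $Q$. Because $T$ is a fixed, non-random function applied identically in both cases, this is automatic, and no further computation is required.
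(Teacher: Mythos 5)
Your proposal is correct and is essentially the paper's own argument: the paper states Lemma \ref{Chain rule T} as an immediate corollary of Lemma \ref{chain rule based} applied to the deterministic kernel induced by $T$, relying on the same standing assumption of standard Borel spaces for the existence of the reverse conditionals. Your added remark that the identical (deterministic) kernel transports both $P$ and $Q$ is exactly the hypothesis the paper implicitly checks, so there is no gap.
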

We finally arrive at the proof of Theorem \ref{Max multiscale shannon entropy}:
\begin{proof}[\bf Proof of Theorem \ref{Max multiscale shannon entropy}]
Based on Lemma \ref{Gibbs is the maximizer entropy}, we can write 
\begin{align}
    H_{(\mathbf{\sigma},\mathbf{T})}(W) -\lambda\mathbb{E}[f(W)] &= \sum_{i=1}^d \sigma_i H\left(W^{(i)}\right)-\lambda\mathbb{E}[f(W)]\nonumber\\
    &=\sum_{i=2}^{d} \sigma_i H\left(W^{(i)}\right)+ \left(\sigma_1 H\left(W\right)-\lambda\mathbb{E}[f(W)]\right)\nonumber\\
    &=\sum_{i=2}^{d} \sigma_i H\left(W^{(i)}\right)+ \sigma_1\left(H\left(W\right)-\frac{\lambda}{\sigma_1}\mathbb{E}[f(W)]\right)\nonumber\\
    &= \underbrace{\sum_{i=2}^{d} \sigma_i H\left(W^{(i)}\right)-\sigma_1D\left(P_{W}\middle\|{P}_{W}^{\rm{Gibbs}}\right)}_{A}\nonumber\\
    &\quad +\sigma_1 \log \left(\sum_{w\in {\cal A}}\exp(-\lambda f(w))\right).\nonumber
\end{align}
Note that $\sum_{w\in {\cal A}}\exp(-\lambda f(w))$ does not depend on $P_W$, therefore it suffices to find the maximizer of $A$. For all $1\leq i \leq d-1$, let $P_{W^{(i+1)}}=T_i\left(P_{W^{(i)}}\right)$ and $M^{(i+1)}_{W^{(i+1)}}=U^{(i)}_{W^{(i+1)}}=T_i\left(U^{(i)}_{W^{(i)}}\right)$ be the image measures of function $T_i$.  
Further, assume that \emph{reverse random transformations} $P_{W^{(i)}|W^{(i+1)}}$ and $U^{(i)}_{W^{(i)}|W^{(i+1)}}$ exist\footnote{By assuming regular conditional probabilities as discussed above.} such that  $P_{W^{(i+1)}}\to P_{W^{(i)}|W^{(i+1)}} \to P_{W^{(i)}}$ and $U^{(i)}_{W^{(i+1)}}\to U^{(i)}_{W^{(i)}|W^{(i+1)}} \to U^{(i)}_{W^{(i)}}$. 
We can rewrite $A$ as: 
\begin{align}
	A&=\sum_{i=2}^{d} \sigma_iH\left(W^{(i)}\right)-\sigma_1 D\left(P_{W}\middle\|P^{\rm{Gibbs}}_W\right)\nonumber\\
	&=\sum_{i=2}^{d} \sigma_iH\left(W^{(i)}\right)-\sigma_1 D\left(P_{W^{(1)}}\middle\|U^{(1)}_{W^{(1)}}\right)\nonumber\\
	&=\sum_{i=2}^{d} \sigma_iH\left(W^{(i)}\right)-\sigma_1 \left(D\left(P_{W^{(2)}}\middle\|M^{(2)}_{W^{(2)}}\right)+D\left(P_{W^{(1)}|W^{(2)}}\middle\|U^{(1)}_{W^{(1)}|W^{(2)}}\middle|P_{W^{(2)}}\right)\right)\label{based on chain rule}\\
	&=\sum_{i=3}^{d} \sigma_iH\left(W^{(i)}\right)+\sigma_2\left(H\left(W^{(2)}\right)-\frac{\sigma_1}{\sigma_2} D\left(P_{W^{(2)}}\middle\|M^{(2)}_{W^{(2)}}\right)\right)\nonumber\\
	&\quad-\sigma_1D\left(P_{W^{(1)}|W^{(2)}}\middle\|U^{(1)}_{W^{(1)}|W^{(2)}}\middle|P_{W^{(2)}}\right)\nonumber\\
	&=\sum_{i=3}^{d} \sigma_iH\left(W^{(i)}\right)+\sigma_2\left(H_{\frac{\sigma_1}{\sigma_1+\sigma_2}}\left(U^{(2)}_{W^{(2)}}\right)-\left(1+\frac{\sigma_1}{\sigma_2}\right)D\left(P_{W^{(2)}}\middle\|\left(M^{(2)}_{W^{(2)}}\right)_{\frac{\sigma_1}{\sigma_1+\sigma_2}}\right)\right)\nonumber\\
	&\quad-\sigma_1D\left(P_{W^{(1)}|W^{(2)}}\middle\|U^{(1)}_{W^{(1)}|W^{(2)}}\middle|P_{W^{(2)}}\right),\label{mixing eq}\\
	&=\left(\sum_{i=3}^{d} \sigma_iH\left(W^{(i)}\right)+(\sigma_1+\sigma_2)D\left(P_{W^{(2)}}\middle\|\left(M^{(2)}_{W^{(2)}}\right)_{\frac{\sigma_1}{\sigma_1+\sigma_2}}\right)\right)\nonumber\\
	&\quad+\sigma_2\left(H_{\frac{\sigma_1}{\sigma_1+\sigma_2}}\left(U^{(2)}_{W^{(2)}}\right)\right)
	-\sigma_1D\left(P_{W^{(1)}|W^{(2)}}\middle\|U^{(1)}_{W^{(1)}|W^{(2)}}\middle|P_{W^{(2)}}\right)\nonumber\\
	&=\underbrace{\left(\sum_{i=3}^{d} \sigma_iH\left(W^{(i)}\right)+(\sigma_1+\sigma_2)D\left(P_{W^{(2)}}\middle\|U^{(2)}_{W^{(2)}}\right)\right)}_{B}\nonumber\\
	&\quad+\sigma_2\left(H_{\frac{\sigma_1}{\sigma_1+\sigma_2}}\left(U^{(2)}_{W^{(2)}}\right)\right)
	-\sigma_1D\left(P_{W^{(1)}|W^{(2)}}\middle\|U^{(1)}_{W^{(1)}|W^{(2)}}\middle|P_{W^{(2)}}\right),\nonumber
	\end{align}
where (\ref{based on chain rule}) follows from Lemma \ref{Chain rule T} and \eqref{mixing eq} follows from Lemma \ref{Shannon entropy KL sum}. Note that the R\'{e}nyi entropy $H_{\frac{\sigma_1}{\sigma_1+\sigma_2}}\left(U^{(2)}_{W^{(2)}}\right)$ does not depend on $P_W$ and that expression $B$ has the same form as $A$ but with one less entropy. If we repeat the same procedure of using Lemma \ref{Chain rule T} and Lemma \ref{Shannon entropy KL sum} for expression B and so forth, we end up writing $A$ as a sum of R\'{e}nyi entropies which do not depend on $P_W$, minus some conditional relative entropies. But then, the conditional relative entropies can all be set to zero simultaneously by taking
\begin{equation}
    \begin{cases} 
    P^{\star}_{W^{(d)}}&= U^{(d)}_{W^{(d)}}\\
    P^{\star}_{W^{(d-1)}|W^{(d)}}&= U^{(d-1)}_{W^{(d-1)}|W^{(d)}}\\
    &\vdots \\
    P^{\star}_{W^{(1)}|W^{(2)}}&= U^{(1)}_{W^{(1)}|W^{(2)}}
    \end{cases}
\end{equation}
which clearly results in the maximizer of $A$. For any distribution $P_W$, we have
\begin{align}
    P_W&=P_{W^{(1)}W^{(2)}\dots W^{(d)}}\label{functioneq}\\
       &=P_{W^{(d)}}P_{W^{(d-1)}|W^{(d)}}\dots P_{W^{(1)}|W^{(2)}}\label{Markoveq}
\end{align}
where \eqref{functioneq} follows from the fact that $W^{(2)},\dots ,W^{(d)}$ are deterministic funtions of $W=W^{(1)}$, and \eqref{Markoveq} follows from the Markov chain $W^{(1)}\leftrightarrow W^{(2)} \leftrightarrow \dots \leftrightarrow W^{(d)}$. Therefore, we deduce
\begin{equation}
    \nonumber
    P^{\star}_W=U^{(d)}_{W^{(d)}}U^{(d-1)}_{W^{(d-1)}|W^{(d)}}\dots U^{(1)}_{W^{(1)}|W^{(2)}}.
\end{equation}
An analogous reasoning can be used for multiscale differential entropy, instead by using Lemma \ref{Gibbs is the maximizer diff entropy} and Lemma \ref{differential entropy proposition}.
\end{proof}

\subsection{Multiscale relative entropy minimization}
For the proof of Theorem \ref{multiscale relative entropy RG theorem}, we first require the following lemmas:
\begin{lemma}\label{Gibbs relative entropy}
  Let ${\cal A}$ be an arbitrary set and function $f: {\cal A}\rightarrow \mathbb{R}$ be such that \\ $\int_{w\in {\cal A}}\exp\left(-\frac{f(w)}{\lambda}\right)Q_W(w)\mathrm{d}w<\infty$. Then for any $P_{W}$ defined on ${\cal A}$ such that $W\sim P_W$, we have
  \begin{equation}
      \nonumber
      \mathbb{E}[f(W)]+\lambda D(P_W\|Q_W)=\lambda D\left(P_{W}\middle\|{P}_{W}^{\rm{Gibbs}}\right)-\lambda \log \left(\int_{w\in {\cal A}}\exp\left(-\frac{f(w)}{\lambda}\right)Q_W(w)\mathrm{d}w\right) ,
  \end{equation}
  where 
  $${P}_{W}^{\rm{Gibbs}}(w)\triangleq\frac{\exp\left(-\frac{f(w)}{\lambda}\right)Q_{W}(w)}{\int_{w\in {\cal A}}\exp\left(-\frac{f(w)}{\lambda}\right)Q_W(w)\mathrm{d}w}, \quad w\in {\cal A},$$ is the Gibbs--Boltzmann distribution.
\end{lemma}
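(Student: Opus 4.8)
The plan is to mirror the proofs of Lemmas \ref{Gibbs is the maximizer entropy} and \ref{Gibbs is the maximizer diff entropy}, since this statement is simply their relative-entropy analogue; the entire argument is a direct algebraic manipulation of the two integrals on the left-hand side, with the log-partition function playing the role of the additive constant that appeared in those earlier lemmas.

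First I would write out the left-hand side explicitly as $\E[f(W)] + \lambda D(P_W\|Q_W) = \int P(w) f(w)\,\mathrm{d}w + \lambda\int P(w)\log\frac{P(w)}{Q(w)}\,\mathrm{d}w$, then factor out $\lambda$ and absorb the energy term into the logarithm using the identity $\frac{f(w)}{\lambda} = -\log\exp(-f(w)/\lambda)$. This collapses the two integrands into the single expression $\lambda\int P(w)\log\frac{P(w)}{Q(w)\exp(-f(w)/\lambda)}\,\mathrm{d}w$, exactly as the $\log\frac{P(w)}{\exp(-\lambda f(w))}$ step in the proof of Lemma \ref{Gibbs is the maximizer entropy}.

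Next I would introduce the partition function $Z \triangleq \int_{w\in\mathcal{A}} \exp(-f(w)/\lambda)Q(w)\,\mathrm{d}w$, which the hypothesis guarantees is finite and strictly positive, so that $P_W^{\mathrm{Gibbs}}$ is a well-defined probability distribution satisfying $Q(w)\exp(-f(w)/\lambda) = Z\,P_W^{\mathrm{Gibbs}}(w)$. Substituting this identity into the denominator and splitting off the $\log Z$ term---which does not depend on $w$ and hence integrates against $P$ to $\log Z$---yields exactly $\lambda D(P_W\|P_W^{\mathrm{Gibbs}}) - \lambda\log Z$, which is the claimed right-hand side. The discrete case is identical after replacing the integral by a sum.

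I do not expect a genuine obstacle here: the only points requiring care are (i) invoking the finiteness of $Z$ from the integrability hypothesis so that $P_W^{\mathrm{Gibbs}}$ and the $\log Z$ term are well-defined, and (ii) reading the identity as an equation in $[-\infty,+\infty]$, so that it remains valid even when $D(P_W\|Q_W) = +\infty$ (both sides then being infinite). As with the earlier Gibbs lemmas, the corollary that $P_W^{\mathrm{Gibbs}}$ is the unique minimizer of $\E[f(W)] + \lambda D(P_W\|Q_W)$ then follows immediately from the non-negativity of relative entropy, with equality attained precisely when $P_W = P_W^{\mathrm{Gibbs}}$.
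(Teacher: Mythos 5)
Your proposal is correct and follows essentially the same route as the paper's proof: both rewrite $\mathbb{E}[f(W)]+\lambda D(P_W\|Q_W)$ as a single integral $\lambda\int P(w)\log\frac{P(w)}{Q(w)\exp(-f(w)/\lambda)}\,\mathrm{d}w$, insert the partition function $Z=\int\exp(-f(w)/\lambda)Q(w)\,\mathrm{d}w$ into the denominator to recognize $P_W^{\mathrm{Gibbs}}$, and pull out the constant $-\lambda\log Z$. Your added care about the finiteness of $Z$ and the extended-real reading when $D(P_W\|Q_W)=+\infty$ is a harmless refinement that the paper leaves implicit.
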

\begin{proof}
    \begin{align}
        \mathbb{E}[f(W)]+\lambda D(P_W\|Q_W)&= \int_{w\in\mathcal{A}}f(w)P(w)\mathrm{d}w + \lambda \int_{w\in\mathcal{A}} P(w)\log \frac{P(w)}{Q(w)}\dw \nonumber\\
        & = \lambda \int_{w\in\mathcal{A}} P(w)\log \frac{P(w)}{\frac{\exp(-\frac{f(w)}{\lambda})Q(w)}{\int_{w\in\mathcal{A}}\exp(-\frac{f(w)}{\lambda})Q(w)\dw}}\dw \nonumber\\
        & \quad -\lambda \log \left(\int_{w\in {\cal A}}\exp\left(-\frac{f(w)}{\lambda}\right)Q_W(w)\mathrm{d}w\right)\nonumber\\
        &= \lambda D\left(P_{W}\middle\|{P}_{W}^{\rm{Gibbs}}\right)-\lambda \log \left(\int_{w\in {\cal A}}\exp\left(-\frac{f(w)}{\lambda}\right)Q_W(w)\mathrm{d}w\right).\nonumber
    \end{align}
\end{proof}
As a corollary of Lemma \ref{Gibbs relative entropy}, we conclude that the Gibbs--Boltzmann distribution ${P}_{W}^{\rm{Gibbs}}$ is the minimizer of $\mathbb{E}[f(W)]+\lambda D(P_W\|Q_W)$ for $\lambda>0$.

For two distributions $Q$ and $R$, let $D_{\theta}(Q\|R)$ denote the R\'{e}nyi divergence of order $\theta$ between $Q$ and $R$, which for $\theta \in (0,1)\cup (1,\infty)$ is defined as 
$$D_{\theta}(Q\|R)\triangleq \frac{1}{\theta-1}\log \left(\intw Q(w)^{\theta}R(w)^{1-\theta}\right).$$

The following lemma, also appearing in \cite[Theorem 30]{van2014renyi}, shows how to linearly combine relative entropies, using tilted distributions. For the sake of completeness, we provide a proof.
\begin{lemma}\label{Tilted Renyi lemma main}
Let $\theta\in [0,1]$. For any $P, Q$ and $R$,
\begin{align}
	\theta D(P\|Q)+(1-\theta)D(P\|R)= D\left(P\|(Q,R)_{\theta}\right)+(1-\theta)D_{\theta}(Q\|R).\nonumber
\end{align}	
\end{lemma}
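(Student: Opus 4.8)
The plan is to prove this as an exact algebraic identity by a direct pointwise manipulation of the logarithms, exactly paralleling the single-scale computations in Lemmas \ref{Gibbs is the maximizer entropy} and \ref{Gibbs relative entropy}; no optimization or variational argument is needed, since both sides are deterministic functionals of $P,Q,R$.

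First I would expand the left-hand side as a single integral against $P$. Using $\theta\log P+(1-\theta)\log P=\log P$ and collecting the $Q$ and $R$ contributions, one gets $\theta D(P\|Q)+(1-\theta)D(P\|R)=\intw P(w)\log\frac{P(w)}{Q(w)^{\theta}R(w)^{1-\theta}}\dw$. The denominator $Q^{\theta}R^{1-\theta}$ is precisely the \emph{unnormalized} tilted density, so the next step is to introduce the normalizing constant $Z_{\theta}\triangleq\intw Q(w)^{\theta}R(w)^{1-\theta}\dw$, which is finite under the stated assumptions, and to write $Q^{\theta}R^{1-\theta}=Z_{\theta}\,(Q,R)_{\theta}$ by the definition of the tilted distribution.

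Substituting this into the integrand and splitting the logarithm yields $\intw P(w)\log\frac{P(w)}{(Q,R)_{\theta}(w)}\dw-\log Z_{\theta}$, where the first term is exactly $D(P\|(Q,R)_{\theta})$ and the second is a constant, since $\log Z_{\theta}$ pulls out of the integral and $\intw P(w)\dw=1$. Finally I would identify $-\log Z_{\theta}$ with the Rényi term: from the definition $D_{\theta}(Q\|R)=\frac{1}{\theta-1}\log Z_{\theta}$ one obtains $-\log Z_{\theta}=(1-\theta)D_{\theta}(Q\|R)$, which closes the identity.

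The computation itself presents no real obstacle; the only points requiring care are (i) ensuring absolute continuity so that every integrand is well-defined—one should take $P\ll Q$ and $P\ll R$ with $\theta\in[0,1]$, so that $Q^{\theta}R^{1-\theta}$ is meaningful on the relevant support and $Z_\theta<\infty$—and (ii) handling the boundary cases $\theta=0$ and $\theta=1$, where $D_{\theta}$ is defined by continuous extension and one side of the identity degenerates to a single relative entropy. These are the same regularity caveats implicit in the preceding lemmas, and the discrete case follows verbatim with sums replacing integrals.
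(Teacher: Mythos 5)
Your proof is correct and follows the paper's own argument essentially verbatim: expand the $\theta$-weighted sum into a single integral against $P$, normalize $Q^{\theta}R^{1-\theta}$ by $Z_{\theta}$ to recognize the tilted distribution, and identify $-\log Z_{\theta}$ with $(1-\theta)D_{\theta}(Q\|R)$ via the definition of R\'enyi divergence. Your added remarks on absolute continuity and the boundary cases $\theta\in\{0,1\}$ are sensible caveats the paper leaves implicit, but they do not change the argument.
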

\begin{proof}
\begin{align}
    \theta D(P\|Q)+(1-\theta)D(P\|R)&= \theta\intw P(w)\log \frac{P(w)}{Q(w)}\dw +(1-\theta)\intw P(w)\log \frac{P(w)}{R(w)}\dw\nonumber\\
                                    &=\intw P(w)\log \frac{P(w)}{Q(w)^{\theta}R(w)^{(1-\theta)}}\dw \nonumber\\
                                    &=\intw P(w)\log \frac{P(w)}{\frac{Q(w)^{\theta}R(w)^{(1-\theta)}}{\intw Q(w)^{\theta}R(w)^{(1-\theta)}\dw}}\dw \nonumber\\
                                    &\quad -\log \left( \intw Q(w)^{\theta}R(w)^{(1-\theta)}\dw \right)\nonumber\\
                                    &=D\left(P\|(Q,R)_{\theta}\right)+(1-\theta)D_{\theta}(Q\|R).\nonumber
\end{align}
\end{proof}
We can now present the proof of Theorem \ref{multiscale relative entropy RG theorem}:
\begin{proof}[\bf Proof of Theorem \ref{multiscale relative entropy RG theorem}] Based on Lemma \ref{Gibbs is the maximizer entropy}, we can write 
\begin{align}
    \frac{\mathbb{E}[f(W)]}{\lambda}+D_{(\mathbf{\sigma},\mathbf{T})}(W) &= \sum_{i=1}^d \sigma_iD\left(P_{W^{(i)}}\middle\|Q_{W^{(i)}}\right)+\frac{\mathbb{E}[f(W)]}{\lambda}\nonumber\\
    &=\sum_{i=2}^{d} \sigma_iD\left(P_{W^{(i)}}\middle\|Q_{W^{(i)}}\right)+ \frac{1}{\lambda}\left(\lambda\sigma_1 D\left(P_{W^{(i)}}\middle\|Q_{W^{(i)}}\right)+\mathbb{E}[f(W)]\right)\nonumber\\
    &= \underbrace{\sum_{i=2}^{d} \sigma_iD\left(P_{W^{(i)}}\middle\|Q_{W^{(i)}}\right)+\sigma_1D\left(P_{W}\middle\|{P}_{W}^{\rm{Gibbs}}\right)}_{A}\nonumber\\
    &\quad -\sigma_1\log \left(\int_{w\in {\cal A}}\exp\left(-\frac{f(w)}{\lambda\sigma_1}\right)Q_W(w)\mathrm{d}w\right).\nonumber
\end{align}
Note that $\int_{w\in {\cal A}}\exp\left(-\frac{f(w)}{\lambda\sigma_1}\right)Q_W(w)\mathrm{d}w$ does not depend on $P_W$, therefore it suffices to find the minimizer of $A$. 
For all $1\leq i \leq d-1$, let $P_{W^{(i+1)}}=T_i\left(P_{W^{(i)}}\right)$ and $M^{(i+1)}_{W^{(i+1)}}=U^{(i)}_{W^{(i+1)}}=T_i\left(U^{(i)}_{W^{(i)}}\right)$ be the image measures of function $T_i$.   
Further, assume that reverse random transformations $P_{W^{(i)}|W^{(i+1)}}$ and $U^{(i)}_{W^{(i)}|W^{(i+1)}}$ exist\footnote{By assuming regular conditional probabilities as discussed in the previous subsection.} such that  $P_{W^{(i+1)}}\to P_{W^{(i)}|W^{(i+1)}} \to P_{W^{(i)}}$ and $U^{(i)}_{W^{(i+1)}}\to U^{(i)}_{W^{(i)}|W^{(i+1)}} \to U^{(i)}_{W^{(i)}}$. 
We can rewrite $A$ as:
\begin{align}
	A&=\sum_{i=2}^{d} \sigma_iD\left(P_{W^{(i)}}\middle\|Q_{W^{(i)}}\right)+\sigma_1 D\left(P_{W}\middle\|P^{\rm{Gibbs}}_W\right)\nonumber\\
	&=\sum_{i=2}^{d} \sigma_iD\left(P_{W^{(i)}}\middle\|Q_{W^{(i)}}\right)+\sigma_1 D\left(P_{W^{(1)}}\middle\|U^{(1)}_{W^{(1)}}\right)\nonumber\\
	&=\sum_{i=2}^{d} \sigma_iD\left(P_{W^{(i)}}\middle\|Q_{W^{(i)}}\right)\nonumber\\
	&\quad +\sigma_1 \left(D\left(P_{W^{(2)}}\middle\|M^{(2)}_{W^{(2)}}\right)+D\left(P_{W^{(1)}|W^{(2)}}\middle\|U^{(1)}_{W^{(1)}|W^{(2)}}\middle|P_{W^{(2)}}\right)\right)\label{based on chain rule2}\\
	&=\sum_{i=3}^{d} \sigma_iD\left(P_{W^{(i)}}\middle\|Q_{W^{(i)}}\right)\nonumber\\
	&\quad+(\sigma_1+\sigma_2)\left(\frac{\sigma_2}{\sigma_1+\sigma_2} D\left(P_{W^{(2)}}\middle\|Q_{W^{(2)}}\right)+\frac{\sigma_1}{\sigma_1+\sigma_2} D\left(P_{W^{(2)}}\middle\|M^{(2)}_{W^{(2)}}\right)\right)\nonumber\\
	&\quad+\sigma_1D\left(P_{W^{(1)}|W^{(2)}}\middle\|U^{(1)}_{W^{(1)}|W^{(2)}}\middle|P_{W^{(2)}}\right)\nonumber\\
	&=\sum_{i=3}^{d} \sigma_iD\left(P_{W^{(i)}}\middle\|Q_{W^{(i)}}\right)\nonumber\\
	&\quad +(\sigma_1+\sigma_2)\left(D\left(P_{W^{(2)}}\middle\|\left(M^{(2)}_{W^{(2)}},Q_{W^{(2)}}\right)_{\frac{\sigma_1}{\sigma_1+\sigma_2}}\right)+\frac{\sigma_1}{\sigma_1+\sigma_2}D_{\frac{\sigma_2}{\sigma_1+\sigma_2}}\left(P^{(2)}_{W^{(2)}}\middle\|Q_{W^{(2)}}\right)\right)\nonumber\\
	&\quad+\sigma_1D\left(P_{W^{(1)}|W^{(2)}}\middle\|U^{(1)}_{W^{(1)}|W^{(2)}}\middle|P_{W^{(2)}}\right),\label{mixing eq2}\\
	&=\left(\sum_{i=3}^{d} \sigma_iD\left(P_{W^{(i)}}\middle\|Q_{W^{(i)}}\right)+(\sigma_1+\sigma_2)D\left(P_{W^{(2)}}\middle\|\left(M^{(2)}_{W^{(2)}},Q_{W^{(2)}}\right)_{\frac{\sigma_1}{\sigma_1+\sigma_2}}\right)\right)\nonumber\\
	&\quad+\sigma_1D_{\frac{\sigma_2}{\sigma_1+\sigma_2}}\left(P^{(2)}_{W^{(2)}}\middle\|Q_{W^{(2)}}\right)
	+\sigma_1D\left(P_{W^{(1)}|W^{(2)}}\middle\|U^{(1)}_{W^{(1)}|W^{(2)}}\middle|P_{W^{(2)}}\right)\nonumber\\
	&=\underbrace{\left(\sum_{i=3}^{d} \sigma_iD\left(P_{W^{(i)}}\middle\|Q_{W^{(i)}}\right)+(\sigma_1+\sigma_2)D\left(P_{W^{(2)}}\middle\|U^{(2)}_{W^{(2)}}\right)\right)}_{B}\nonumber\\
	&\quad+\sigma_1D_{\frac{\sigma_2}{\sigma_1+\sigma_2}}\left(P^{(2)}_{W^{(2)}}\middle\|Q_{W^{(2)}}\right)
	+\sigma_1D\left(P_{W^{(1)}|W^{(2)}}\middle\|U^{(1)}_{W^{(1)}|W^{(2)}}\middle|P_{W^{(2)}}\right),\nonumber
	\end{align}
where (\ref{based on chain rule2}) follows from Lemma \ref{Chain rule T} and \eqref{mixing eq2} follows from Lemma \ref{Tilted Renyi lemma main}
.
Note that the R\'{e}nyi divergence $D_{\frac{\sigma_2}{\sigma_1+\sigma_2}}\left(P^{(2)}_{W^{(2)}}\middle\|Q_{W^{(2)}}\right)$ does not depend on $P_W$ and that expression $B$ has the same form as $A$ but with one less relative entropy. If we repeat the same procedure of using Lemmas \ref{Chain rule T} and \ref{Tilted Renyi lemma main} for expression B and so forth, we ultimately write $A$ as a sum of R\'{e}nyi divergences which do not depend on $P_W$, plus some conditional relative entropies. However, the conditional relative entropies can all be set to zero simultaneously, by taking
\begin{equation}
    \begin{cases} 
    P^{\star}_{W^{(d)}}&= U^{(d)}_{W^{(d)}}\\
    P^{\star}_{W^{(d-1)}|W^{(d)}}&= U^{(d-1)}_{W^{(d-1)}|W^{(d)}}\\
    &\vdots \\
    P^{\star}_{W^{(1)}|W^{(2)}}&= U^{(1)}_{W^{(1)}|W^{(2)}}
    \end{cases}
\end{equation}
which clearly results in the minimizer of $A$. For any distribution $P_W$, we have
\begin{align}
    P_W&=P_{W^{(1)}W^{(2)}\dots W^{(d)}}\label{functioneq}\\
       &=P_{W^{(d)}}P_{W^{(d-1)}|W^{(d)}}\dots P_{W^{(1)}|W^{(2)}}\label{Markoveq}
\end{align}
where \eqref{functioneq} follows from the fact that $W^{(2)},\dots ,W^{(d)}$ are deterministic funtions of $W=W^{(1)}$, and \eqref{Markoveq} follows from the Markov chain $W^{(1)}\leftrightarrow W^{(2)} \leftrightarrow \dots \leftrightarrow W^{(d)}$. Therefore, we deduce
\begin{equation}
\nonumber
    P^{\star}_W=U^{(1)}_{W^{(d)}}U_{W^{(d-1)}|W^{(d)}}\dots U_{W^{(1)}|W^{(2)}}.
\end{equation}
\end{proof}

\section{Proofs for Section \ref{Gaussians section}}
Let $N(\mu,\Sigma)$ denote a (multivariate) Gaussian distribution with mean $\mu$ and covariance matrix $\Sigma$.
The proof of Theorem \ref{Gaussian closedness} is based on the following well-known properties of the multivariate Gaussian distribution:
\begin{lemma}[Gaussian marginalization]\label{Gaussian marginalization}
	Assume that $N$-dimensional vector $\bf x$ has a normal distribution $N(\mu, \Sigma)$ and is partitioned as 
		$\bf x=\begin{pmatrix}
			{\bf x}_1\\
			{\bf x}_2
		\end{pmatrix} .$ Accordingly, $\mu$ and $\Sigma$ are partitioned as follows:
		$
			\mu=\begin{pmatrix}
				\mu_1\\
				\mu_2
			\end{pmatrix}$ and $ \Sigma=
			\begin{pmatrix}
				\Sigma_{11} & \Sigma_{12}\\
				\Sigma_{21} & \Sigma_{22}
			\end{pmatrix}.
		$
		Then ${\bf x}_1\sim N(\mu_1,\Sigma_{11}).$
\end{lemma}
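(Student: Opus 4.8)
The plan is to prove that the marginal is Gaussian without ever inverting the covariance matrix, by characterizing the law of $\mathbf{x}_1$ through its characteristic function. Recall that an $N$-dimensional vector $\mathbf{x}$ is distributed as $N(\mu,\Sigma)$ if and only if $\E[\exp(i t^\top \mathbf{x})] = \exp(i t^\top \mu - \tfrac{1}{2} t^\top \Sigma t)$ for all $t \in \R^N$; since the characteristic function determines the distribution uniquely, it suffices to compute the characteristic function of $\mathbf{x}_1$ and recognize it as that of a Gaussian with the claimed parameters.

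First I would obtain the characteristic function of the subvector $\mathbf{x}_1$ by restricting the test vector to the relevant block. Writing $t = (t_1^\top, \mathbf{0}^\top)^\top$ with $t_1$ of the same dimension as $\mathbf{x}_1$, we have $t^\top \mathbf{x} = t_1^\top \mathbf{x}_1$, so that $\E[\exp(i t_1^\top \mathbf{x}_1)] = \E[\exp(i t^\top \mathbf{x})]$. Next I would substitute the block partitions of $\mu$ and $\Sigma$ into the Gaussian characteristic function: the linear term collapses to $t_1^\top \mu_1$ because the lower block of $t$ is zero, and the quadratic form $t^\top \Sigma t$ collapses to $t_1^\top \Sigma_{11} t_1$ because every contribution involving $\Sigma_{12}$, $\Sigma_{21}$, or $\Sigma_{22}$ is annihilated by the zero block. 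This yields $\E[\exp(i t_1^\top \mathbf{x}_1)] = \exp(i t_1^\top \mu_1 - \tfrac{1}{2} t_1^\top \Sigma_{11} t_1)$, which is exactly the characteristic function of $N(\mu_1,\Sigma_{11})$. Uniqueness of the characteristic function then gives $\mathbf{x}_1 \sim N(\mu_1,\Sigma_{11})$.

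Equivalently, one may regard $\mathbf{x}_1 = A\mathbf{x}$ for the projection $A = [\,I \;\; 0\,]$ and invoke stability of the Gaussian family under affine maps, which delivers $\mathbf{x}_1 \sim N(A\mu, A\Sigma A^\top) = N(\mu_1,\Sigma_{11})$ at once. I do not expect a genuine obstacle, as the statement is elementary; the only pitfall worth flagging is that the alternative route of integrating $\mathbf{x}_2$ out of the joint density directly would force a Schur-complement block-inversion of $\Sigma$ in order to complete the square, which is precisely the computation the characteristic-function argument is designed to avoid.
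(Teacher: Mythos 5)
Your proof is correct. Note that the paper itself offers no proof of this lemma: it is stated in the appendix as a well-known property of multivariate Gaussians and used as a black box in the proof of Theorem \ref{Gaussian closedness}, so there is no argument in the paper to compare against. Your characteristic-function computation is a standard and complete justification: restricting to $t=(t_1^\top,\mathbf{0}^\top)^\top$ correctly collapses $t^\top\mu$ to $t_1^\top\mu_1$ and $t^\top\Sigma t$ to $t_1^\top\Sigma_{11}t_1$, and uniqueness of characteristic functions finishes the job. Your closing remark is also well taken --- the affine-map route $\mathbf{x}_1=\begin{pmatrix}I & 0\end{pmatrix}\mathbf{x}$ is the one-line version of the same idea, and both avoid the Schur-complement inversion that a direct density integration would require; the latter point is not merely cosmetic here, since the lemma as stated does not assume $\Sigma$ (or $\Sigma_{22}$) is invertible, so the density-based route would need an extra nondegeneracy hypothesis that your argument does not.
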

\begin{lemma}[Gaussian conditioning]\label{Gaussian conditioning}
		Assume that $N$-dimensional vector $\bf x$ has a normal distribution $N(\mu, \Sigma)$ and is partitioned as 
		$$\bf x=\begin{pmatrix}
			{\bf x}_1\\
			{\bf x}_2
		\end{pmatrix} .$$ Accordingly, $\mu$ and $\Sigma$ are partitioned as follows:
		\begin{equation*}
			\mu=\begin{pmatrix}
				\mu_1\\
				\mu_2
			\end{pmatrix} ~ and ~ \Sigma=
			\begin{pmatrix}
				\Sigma_{11} & \Sigma_{12}\\
				\Sigma_{21} & \Sigma_{22}
			\end{pmatrix}.
		\end{equation*}
		Then the distribution of ${\bf x}_1$ conditional on ${\bf x}_2=\bf a$ is multivariate normal $({\bf x}_1|{\bf x}_2={\bf a}) \sim N(\bar{\mu},\bar{\Sigma})$ where 
		\begin{equation*}
			\bar{\mu}=\mu_1+\Sigma_{12}\Sigma_{22}^{-1}(\mathbf{a}-\mu_2)
		\end{equation*}
		and 
		\begin{equation*}
			\bar{\Sigma}=\Sigma_{11}-\Sigma_{12}\Sigma_{22}^{-1}\Sigma_{21}.
		\end{equation*}
	\end{lemma}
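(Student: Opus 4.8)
The plan is to prove the conditioning formula without manipulating the joint density directly, relying instead on two robust facts: an affine image of a Gaussian vector is again Gaussian, and two jointly Gaussian blocks that are uncorrelated are independent. The central idea is to \emph{decorrelate} ${\bf x}_1$ from ${\bf x}_2$ by subtracting off the right linear function of ${\bf x}_2$. Concretely, I would set $B \triangleq \Sigma_{12}\Sigma_{22}^{-1}$ and define the new block ${\bf y} \triangleq {\bf x}_1 - B{\bf x}_2$. Since
\[
\begin{pmatrix} {\bf y} \\ {\bf x}_2 \end{pmatrix} = \begin{pmatrix} I & -B \\ 0 & I \end{pmatrix}\begin{pmatrix} {\bf x}_1 \\ {\bf x}_2 \end{pmatrix}
\]
is a linear image of the Gaussian vector ${\bf x}$, the pair $({\bf y},{\bf x}_2)$ is jointly Gaussian.

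The key computation is then to verify that ${\bf y}$ and ${\bf x}_2$ are uncorrelated and to read off the marginal law of ${\bf y}$. Using $\Sigma_{21} = \Sigma_{12}^T$ and the symmetry of $\Sigma_{22}$, I would compute
\[
\mathrm{Cov}({\bf y},{\bf x}_2) = \Sigma_{12} - B\Sigma_{22} = \Sigma_{12} - \Sigma_{12}\Sigma_{22}^{-1}\Sigma_{22} = 0,
\]
so that, being jointly Gaussian, ${\bf y}$ and ${\bf x}_2$ are independent. A parallel computation gives $\E[{\bf y}] = \mu_1 - B\mu_2$ and
\[
\mathrm{Cov}({\bf y}) = \Sigma_{11} - B\Sigma_{21} - \Sigma_{12}B^T + B\Sigma_{22}B^T = \Sigma_{11} - \Sigma_{12}\Sigma_{22}^{-1}\Sigma_{21},
\]
where the three trailing terms collapse because each equals $\Sigma_{12}\Sigma_{22}^{-1}\Sigma_{21}$; this is exactly the claimed $\bar{\Sigma}$. (Equivalently, these marginals may be obtained by applying Lemma \ref{Gaussian marginalization} to the jointly Gaussian pair $({\bf y},{\bf x}_2)$.)

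To finish, I would condition on ${\bf x}_2 = {\bf a}$. Because ${\bf y}$ is independent of ${\bf x}_2$, the conditional law of ${\bf y}$ given ${\bf x}_2 = {\bf a}$ coincides with its marginal $N(\mu_1 - B\mu_2,\, \bar{\Sigma})$. Writing ${\bf x}_1 = {\bf y} + B{\bf x}_2$ and noting that on the event ${\bf x}_2 = {\bf a}$ the term $B{\bf x}_2 = B{\bf a}$ is a deterministic shift, the conditional law of ${\bf x}_1$ is that of ${\bf y}$ translated by $B{\bf a}$, namely
\[
({\bf x}_1 \mid {\bf x}_2 = {\bf a}) \sim N\!\left(\mu_1 + \Sigma_{12}\Sigma_{22}^{-1}({\bf a} - \mu_2),\, \bar{\Sigma}\right),
\]
which is the assertion. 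The only genuine hypothesis I would flag is that $\Sigma_{22}$ is invertible (nondegeneracy of the conditioning block), which is needed for $B$ to be well defined and is implicit in conditioning on ${\bf x}_2 = {\bf a}$; the sole nontrivial probabilistic input is the standard fact that uncorrelated jointly Gaussian vectors are independent, and I expect establishing (or citing) that fact to be the only step beyond routine matrix algebra.
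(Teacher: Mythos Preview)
Your proof is correct and is the standard decorrelation argument for this classical fact. The paper itself does not supply a proof of this lemma; it is stated as a well-known property of multivariate Gaussians and then used in the proof of Proposition~\ref{Gaussian concatenation}, so there is no approach in the paper to compare against.
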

\begin{proof}[\bf Proof of Proposition \ref{Gaussian concatenation}]
Let $U_{W_1}^{(1)}=N(\mu,Q^{-1})$ (where $Q$ is the precision matrix) and  $U_{W_1,W_2}^{(2)}=N(\bar{\mu},{\bar{Q}}^{-1})$ where $\bar{\mu}=[\mu_1,\mu_2]^T$ and 
\[\bar{Q}=
\begin{pmatrix}
	A & B \\
      B^T & D \\
\end{pmatrix}.
  \]
    Based on Lemma \ref{Gaussian conditioning}, we have 
    \[U_{W_2|W_1=w_1}^{(2)}=N(\bar{\mu}_2-D^{-1}B^T(w_1-\bar{\mu}_1),D).\]
    Therefore, 
    \begin{align*}
      P_{W_1W_2}(w_1,w_2)&=U_{W_1}^{(1)}U_{W_2|W_1}^{(2)}\nonumber\\
      &\propto  \exp\bigg(-\frac{1}{2}(w_1-\mu)^TQ(w_1-\mu)\\
      &\quad-\frac{1}{2}\big(w_2-\bar{\mu}_2+D^{-1}B^T(w_1-\bar{\mu}_1)\big)^TD(w_2-\bar{\mu}_2+D^{-1} B^T(w_1-\bar{\mu}_1))\bigg)\\
      &=\exp\biggl(-\frac{1}{2}\bigg[(w_1-\mu)^TQ(w_1-\mu)+(w_1-\bar{\mu}_1)^TBD^{-1}DD^{-1}B^T(w_1-\bar{\mu}_1)\\
      &\quad+(w_1-\bar{\mu}_1)^TBD^{-1}D(w_2-\bar{\mu}_2)+(w_2-\bar{\mu}_2)^TDD^{-1}B^T(w_1-\bar{\mu}_1)\\
      &\quad+(w_2-\bar{\mu}_2)^TD(w_2-\bar{\mu}_2)\bigg]\biggr).
          \end{align*}
          Thus, $P_{W_1W_2}$ is a multivariate Gaussian distribution with precision matrix
  \begin{equation}
  	\nonumber
  	\widehat{Q}=
  \begin{pmatrix}
  	Q+BD^{-1}B^T & B \\
      B^T & D \\
  \end{pmatrix}.
  \end{equation}
  Its mean can be readily derived as well.
  \end{proof}

\section{Proofs for Section \ref{neural network section}}

\begin{proof}[\bf Proof of Theorem \ref{CMI generalization deep nets theorem}]
    For all $1\leq i\leq d$, let $$L_{\mu}(W_1,\dots,W_i)\triangleq \E[|h_i(x)-y|^2_2],$$ $$L_{S}(W_1,\dots,W_i)\triangleq \frac{1}{n}\sum_{i=1}^n[|h_i(x_i)-y_i|^2_2]$$ and $$\mathrm{gen}(W_1,\dots,W_i)\triangleq L_{\mu}(W_1,\dots,W_i)-L_{S}(W_1,\dots,W_i).$$ We can write
    \begin{align}
        \E[L_{\mu}(W)]-\E[L_S(W)]&=\E[\gen(W_1,\dots,W_d)]\nonumber\\
        &=\E(\gen(W_1,\dots,W_{d})-\gen(W_1,\dots,W_{d-1}))\nonumber\\
        &\quad+\dots+\E(\gen(W_1,W_2)-\gen(W_1))+\E(\gen(W_1))\nonumber\\
        &\leq \frac{C}{d\sqrt{n}}\sum_{i=1}^{d}\sqrt{I\left(S;W_1,\dots,W_{d-i+1}\right)}\label{CMI boundd}\\
        &=\frac{C}{d\sqrt{n}}\inf_{\gamma, Q_W}\sum_{i=1}^{d}\left(\gamma_i D\left(P_{W_1\dots W_{d-i+1}|S}\middle\|Q_{W_1\dots W_{d-i+1}}\middle|P_S\right)+\frac{1}{4\gamma_i} \right),\label{variational}
    \end{align}
    where $C=2(eR)^2$, \eqref{CMI boundd} follows from Lemma \ref{chaining link distance} and the mutual information bound \cite{Russo, xu2017information}, similar to the technique of \cite{asadi2018chaining}, and \eqref{variational} follows from removing the square root and replacing mutual information with conditional relative entropy, variationally.
\end{proof}
\begin{proof}[\bf Proof of Theorem \ref{DPG thoerem}] Based on induction on $i$ and the triangle inequality, we have $|h_{i-1}|_2\leq \exp\left(\frac{i-1}{d}\right)|x|_2\leq e|x|_2.$ In particular, the output of the network is bounded as $|h_d(x)|_2\leq eR$. Therefore, on the one hand, the single-scale generalization bound has the following form:
\begin{align}
\E[L_{\mu}(W)]&\leq \E[L_S(W)]+\frac{C}{\sqrt{n}}\sqrt{I(S;W_1,\dots,W_d)}\nonumber\\
            &=\E[L_S(W)]+\frac{C}{\sqrt{n}}\inf_{\hat{\gamma}, Q_W}\left(\hat{\gamma}D(P_{W_1\dots W_d|S}\|Q_{W_1,\dots,W_d}|P_S)+\frac{1}{4\hat{\gamma}}\right). \label{gibbsupper}
\end{align}
For a given $\widehat{Q}_W$ and $\hat{\gamma}$, let $P^{\rm{Gibbs}}_{W|S}$ be the single-scale Gibbs posterior which minimizes the right side of \eqref{gibbsupper}. With a similar technique to \cite{xu2017information}, we can write
\begin{align}
    \risk\left(P^{\rm{Gibbs}}_{W|S} \right) & = \E[L_{\mu}(W)]\nonumber\\
    &\leq \E[L_S(W)]+\frac{C}{\sqrt{n}}\left(\hat{\gamma}D(P_{W_1\dots W_d|S}\|{Q}_{W_1,\dots,W_d}|P_S)+\frac{1}{4\hat{\gamma}}\right)\nonumber\\
    & \leq \E[L_S(\bar{W})]+\frac{C}{\sqrt{n}}\left(\hat{\gamma}D(\widehat{Q}_{{W}_1\dots {W}_d}\|{Q}_{W_1,\dots,W_d}|P_S)+\frac{1}{4\hat{\gamma}}\right)\label{based on prev}\\
    & = \E[L_{\mu}(\bar{W})] + \frac{C}{\sqrt{n}}\left(\hat{\gamma}D(\widehat{Q}_W\|{Q}_{W})+\frac{1}{4\hat{\gamma}}\right),\label{excess_risk1}
\end{align}
where $\bar{W}\sim \widehat{Q}_W$ and \eqref{based on prev} follows from the fact that $P^{\rm{Gibbs}}_{W|S}$ minimizes \eqref{gibbsupper}.
The excess risk \eqref{excess_risk1} is minimized by taking $\hat{\gamma}\gets 1/\sqrt{4D(\widehat{Q}_W\|{Q}_{W})}$. For such $\hat{\gamma}$, the Gibbs posterior satisfies the following bound on its population risk:
\begin{equation}
   \risk\left(P^{\rm{Gibbs}}_{W|S} \right)= \E[L_{\mu}(W)]\leq \E[L_{\mu}(\bar{W})]+\frac{C}{\sqrt{n}}\sqrt{D(\widehat{Q}_W\|{Q}_{W})}. \label{Excess_optimal1}
\end{equation}
On the other hand, based on Theorem \ref{CMI generalization deep nets theorem}, for the multiscale Gibbs posterior $P^{\star}_{W|S}$ given $\gamma$ and $\widehat{Q}_W$, we can write 
\begin{align}
\risk\left(P^{\star}_{W|S} \right) & = \E[L_{\mu}(W)]\nonumber\\
    & \leq \E\left[ L_S(W)\right]+ \frac{C}{d\sqrt{n}}\sum_{i=1}^{d}\left(\gamma_i D\left(P_{W_1\dots W_{d-i+1}|S}\middle\|Q_{W_1\dots W_{d-i+1}}\middle|P_S\right)+\frac{1}{4\gamma_i} \right)\nonumber \\
    & \leq \E[L_S(\bar{W})] + \frac{C}{d\sqrt{n}}\sum_{i=1}^{d}\left(\gamma_i D\left(\widehat{Q}_{{W}_1\dots {W}_{d-i+1}}\middle\|Q_{W_1\dots W_{d-i+1}}\middle|P_S\right)+\frac{1}{4\gamma_i} \right)\nonumber\\
    & =\E[L_{\mu}(\bar{W})] + \frac{C}{d\sqrt{n}}\sum_{i=1}^{d}\left(\gamma_i D\left(\widehat{Q}_{{W}_1\dots {W}_{d-i+1}}\middle\|Q_{W_1\dots W_{d-i+1}}\right)+\frac{1}{4\gamma_i} \right)\label{excess_risk2}
\end{align}
The excess risk \eqref{excess_risk2} is minimized by taking ${\gamma_i}\gets 1/\sqrt{4D(\widehat{Q}_{W_1\dots W_{d-i+1}}\|{Q}_{W_1\dots W_{d-i+1}})}$ for all $1\leq i \leq d$. For such ${\gamma}$, the multiscale Gibbs posterior satisfies the following bound on its population risk:
\begin{equation}
   \risk\left(P^{\star}_{W|S} \right)= \E[L_{\mu}(W)]\leq \E[L_{\mu}(\bar{W})]+\frac{C}{d\sqrt{n}}\sum_{i=1}^d\sqrt{D\left(\widehat{Q}_{W^{(i)}}\|{Q}_{W^{(i)}}\right)}. \label{Excess_optimal2}
\end{equation}
The difference between the right sides of \eqref{Excess_optimal1} and \eqref{Excess_optimal2} is equal to 
\begin{equation}
    \frac{C}{d\sqrt{n}}\left(d\sqrt{D(\widehat{Q}_W\|{Q}_{W})} - \sum_{i=1}^d\sqrt{D\left(\widehat{Q}_{W^{(i)}}\|{Q}_{W^{(i)}}\right)}  \right)
    = \frac{C}{d\sqrt{n}}\sum_{i=1}^d \mathrm{DPG}(i).
\end{equation}
\end{proof}
An example of excess risk bound when the synaptic weights take continuous values is as follows. 
\begin{theorem}[Excess risk bound] Let $\widehat{w}=({\widehat{w}}_1,\dots,{\widehat{w}}_d)$ denote a set of weight parameters which achieve the minimum population risk among the whole hypothesis set (i.e., $L_{\mu}(\hat{w})=\inf_{w\in\mathcal{W}}L_{\mu}(w))$. Let $\mathcal{B}^{\epsilon}\triangleq \left\{w\in \mathcal{W}:  \|w_i-{\widehat{w}}_i\|_2\leq \epsilon \textrm{~ for all ~}1\leq i \leq d\right\}$, $\rho(\epsilon)\triangleq \sup_{w\in \mathcal{B}^{\epsilon}}\left\{L_{\mu}(w)-L_{\mu}(\widehat{w})\right\}$, and $B^{(\epsilon)}$ be the uniform distribution on $\mathcal{B}^{\epsilon}$.  For any prior distribution $Q_W$, there exists $\gamma=(\gamma_1,\dots,\gamma_d)$ such that
    \begin{equation}\label{CMI excess risk arbitrary ineq optimal}
        \E\left[ L_{\mu}(W)\right] - \inf_{w\in\W}L_{\mu}(w)\leq  \inf_{\epsilon>0}\left\{\rho(\epsilon)+\frac{C}{d\sqrt{n}}\sum_{i=1}^d \sqrt{ D\left(B^{(\epsilon)}_{W_1\dots W_i}\middle\|Q_{W_1\dots W_i}\right)}\right\}.
\end{equation}
\end{theorem}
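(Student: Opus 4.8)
The plan is to mirror the proof of Theorem \ref{DPG thoerem}, but to replace the comparator measure $\widehat Q_W$ (a Dirac mass there) by the uniform distribution $B^{(\epsilon)}$ on the ball $\mathcal{B}^{\epsilon}$ around $\widehat w$. This substitution is exactly the device that keeps all the relative entropies finite in the continuous-weight regime, since a Dirac comparator would force $D(\widehat Q_W\|Q_W)=\infty$. Fix $\epsilon>0$ and recall that $P^{\star}_{W|S=z^n}$ is, for each $z^n$, the minimizer of the objective in \eqref{multiscale relative entropy minimization}. Plugging the $S$-independent law $B^{(\epsilon)}$ into that objective can only increase its value, and combining this with the generalization bound of Theorem \ref{CMI generalization deep nets theorem} (using $\sigma_i = C\gamma_i/(d\sqrt n)$ so that $\frac{C}{d\sqrt n}\gamma_i D = \sigma_i D$) yields, after taking expectation over $S$,
\begin{equation}
\nonumber
\E[L_{\mu}(W)] \le \E[L_S(\bar W)] + \frac{C}{d\sqrt n}\sum_{i=1}^d\left(\gamma_i\, D\!\left(B^{(\epsilon)}_{W_1\dots W_{d-i+1}}\middle\|Q_{W_1\dots W_{d-i+1}}\right) + \frac{1}{4\gamma_i}\right),
\end{equation}
where $\bar W\sim B^{(\epsilon)}$ and the conditional divergences collapse to unconditional ones because $B^{(\epsilon)}$ does not depend on $S$.

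Next I would invoke independence: since $\bar W\sim B^{(\epsilon)}$ is independent of the sample $S$, we have $\E[L_S(\bar W)]=\E[L_{\mu}(\bar W)]$. The key structural step is then to control $\E[L_{\mu}(\bar W)]$ by the modulus $\rho(\epsilon)$. Because $B^{(\epsilon)}$ is supported on $\mathcal{B}^{\epsilon}$, every realization $\bar w$ obeys $L_{\mu}(\bar w)-L_{\mu}(\widehat w)\le \rho(\epsilon)$ by the definition of $\rho$, so taking expectations gives $\E[L_{\mu}(\bar W)]\le L_{\mu}(\widehat w)+\rho(\epsilon)=\inf_{w\in\W}L_{\mu}(w)+\rho(\epsilon)$. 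Substituting this and rearranging bounds the excess risk $\E[L_{\mu}(W)]-\inf_{w}L_{\mu}(w)$ by $\rho(\epsilon)$ plus the multiscale entropic term.

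I would then optimize the free parameters. Writing $D_i \triangleq D\!\left(B^{(\epsilon)}_{W_1\dots W_{d-i+1}}\middle\|Q_{W_1\dots W_{d-i+1}}\right)$, each summand $\gamma_i D_i + \tfrac{1}{4\gamma_i}$ is minimized over $\gamma_i>0$ at $\gamma_i = 1/\sqrt{4 D_i}$, with value $\sqrt{D_i}$; after the reindexing $j = d-i+1$ the sum becomes $\sum_{j=1}^d\sqrt{D(B^{(\epsilon)}_{W_1\dots W_j}\|Q_{W_1\dots W_j})}$, matching the statement. This establishes, for the $\gamma$ just chosen, the per-$\epsilon$ inequality
\begin{equation}
\nonumber
\E[L_{\mu}(W)]-\inf_{w\in\W}L_{\mu}(w)\le \rho(\epsilon)+\frac{C}{d\sqrt n}\sum_{i=1}^d\sqrt{D\!\left(B^{(\epsilon)}_{W_1\dots W_i}\middle\|Q_{W_1\dots W_i}\right)}.
\end{equation}
Taking $\epsilon$ (and the $\gamma$ attached to it) to approach the infimizer of the right-hand side then delivers the claimed bound.

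The main subtlety I anticipate is bookkeeping around the two nested layers of optimization rather than any deep analytic difficulty. Because the optimal $\gamma$ depends on $\epsilon$ through the divergences $D_i$, and the learned posterior $P^{\star}_{W|S}$ in turn depends on $\gamma$ via $\sigma_i = C\gamma_i/(d\sqrt n)$, the existential quantifier on $\gamma$ must be read as selecting the $\gamma$ belonging to a (near-)minimizing $\epsilon$: the $\inf_{\epsilon}$ on the right is a bound on this single best configuration, not a simultaneous claim over all $\epsilon$ with one fixed $\gamma$. A secondary point worth checking is that $D(B^{(\epsilon)}_{W_1\dots W_j}\|Q_{W_1\dots W_j})<\infty$, which holds whenever $Q_W$ admits a positive density on $\mathcal{B}^{\epsilon}$, so that the optimal $\gamma_i$ are well defined; the trade-off captured by $\inf_{\epsilon}$ then reflects that shrinking $\epsilon$ lowers $\rho(\epsilon)$ while raising each $D_i$.
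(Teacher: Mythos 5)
Your proposal is correct and takes essentially the same route as the paper, whose own proof simply instantiates the optimized multiscale excess-risk bound \eqref{Excess_optimal2} from the proof of Theorem \ref{DPG thoerem} with $\widehat{Q}_W = B^{(\epsilon)}_W$ and takes the infimum over $\epsilon$. The steps you spell out --- plugging the $S$-independent comparator into the minimized objective, using $\E[L_S(\bar{W})]=\E[L_{\mu}(\bar{W})]$, bounding $\E[L_{\mu}(\bar{W})]\leq \inf_{w\in\W}L_{\mu}(w)+\rho(\epsilon)$ via the support of $B^{(\epsilon)}$, and choosing $\gamma_i = 1/\sqrt{4D_i}$ --- are exactly the paper's argument, and your caveat about the chosen $\gamma$ depending on the (near-)optimal $\epsilon$ is a fair reading of the theorem's quantifiers.
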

\begin{proof}
Follows from \eqref{Excess_optimal2} by choosing $\widehat{Q}_W=B^{(\epsilon)}_W$ and taking the infimum over $\epsilon$.
\end{proof} 

\subsection{Teacher-Student example}
Let $d'\triangleq d/M$. By neglecting the value of $\log \frac{1}{q_1}$ with respect to $\log \frac{1}{q_2}$, we have 
\begin{align*}
    \sum_{i=1}^d\mathrm{DPG}(i)&\approx\sqrt{\log \frac{1}{q_2}}\left(d\sqrt{d'}- (\sqrt{1}+\sqrt{2}+\dots+\sqrt{d'}) \right)\\
                   &\approx \sqrt{\log \frac{1}{q_2}}\left(d\sqrt{d'}- \int_0^{d'}\sqrt{x}\mathrm{d}x \right)\\
                   &= \sqrt{\log \frac{1}{q_2}}\left(d\sqrt{d'}- \frac{2}{3}(d^{'})^{\frac{3}{2}} \right)\\
                   &=\left(\sqrt{\log \frac{1}{q_2}}\right)d^{\frac{3}{2}}\left( \frac{M-\frac{2}{3}}{M^{\frac{3}{2}}} \right).
\end{align*}

\subsection{Experiment}
\begin{figure}
    \centering
    \includegraphics[width=0.7\textwidth]{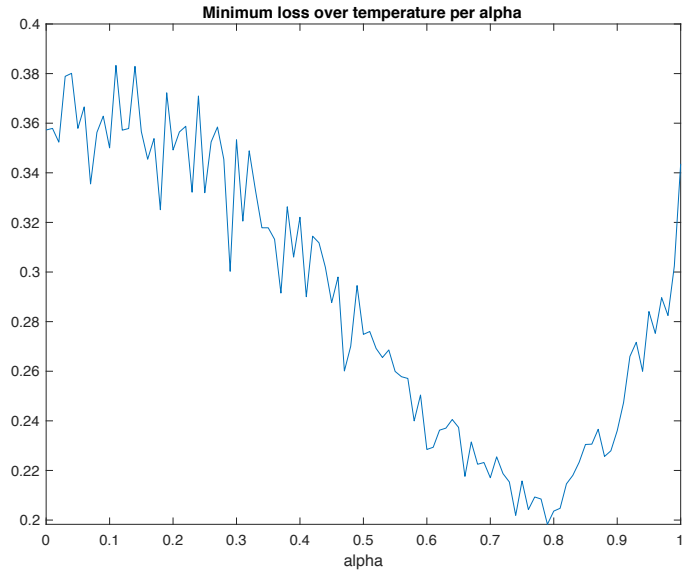}
    \caption{Minimum loss over temperature per $\alpha$. The vertical axis denotes the population risk of a hypothesis randomly chosen by the multiscale Gibbs posterior. Note that $\alpha=0$ corresponds to the single-scale Gibbs distribution, and $\alpha\to 1$ corresponds to learning with random features. The prior distribution has variance $5\times 10^{-4}$.}
    \label{fig2}
\end{figure}
We let the teacher and student networks have width $m=10$. The teacher network had depth $d'=2$ and the student network had depth $d=4$. We let the training set to have size $n=30$ and the instances be random vectors drawn from an i.i.d Gaussian distribution with unit variance. The weights of the teacher networks were drawn randomly from a Gaussian distribution with independent entries on the synapses each with variances $0.1$. We use the Gauss--Newton approximation to the Hessian along with the gradient at the origin to obtain a Gaussian approximation to the initial Gibbs posterior. The range of temperature $\sigma_1$ was chosen as $[10^{-9.5},10^{-2.5}]$. The prior distribution $Q_{W_1\dots W_d}$ was let to be an i.i.d. Gaussian distribution with zero mean and with variance $5\times 10^{-5}$.
For variance $5\times 10^{-4}$, the simulation is repeated with the result given in Figure \ref{fig2}.

We use the following lemma to sample the weights from Gaussian distribution:   
\begin{lemma}[Gaussian sampling]\label{Gaussian sampling lemma}
	Let $\mathbf{z}\sim N(\mathbf{0},I)$. Assume that $\Sigma$ is positive definite matrix and let its Cholesky decomposition be $\Sigma=CC^T$. Let $\mathbf{x}=\mu+C\mathbf{z}$. Then $\mathbf{x}\sim N(\mu,\Sigma)$. 
	\end{lemma}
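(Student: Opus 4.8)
The plan is to prove the claim by a direct change of variables at the level of densities, using the invertibility of the Cholesky factor. Since $\Sigma$ is positive definite, its Cholesky factor $C$ is lower triangular with strictly positive diagonal entries, hence invertible with $\det C = \prod_i C_{ii} > 0$; in particular $(\det C)^2 = \det(CC^T) = \det \Sigma$. The map $g(\mathbf{z}) = \mu + C\mathbf{z}$ is therefore a smooth bijection with inverse $g^{-1}(\mathbf{x}) = C^{-1}(\mathbf{x}-\mu)$ and constant Jacobian $C^{-1}$. First I would invoke the standard density-transformation formula $p_{\mathbf{x}}(x) = p_{\mathbf{z}}\!\left(C^{-1}(x-\mu)\right)\,\lvert\det C^{-1}\rvert$, starting from the known standard-normal density $p_{\mathbf{z}}(z) = (2\pi)^{-n/2}\exp(-\tfrac12 z^T z)$.

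Next I would substitute $z = C^{-1}(x-\mu)$ into the exponent and simplify the quadratic form: $z^T z = (x-\mu)^T (C^{-1})^T C^{-1}(x-\mu) = (x-\mu)^T (CC^T)^{-1}(x-\mu) = (x-\mu)^T \Sigma^{-1}(x-\mu)$, where the middle equality uses $(C^{-1})^T C^{-1} = (CC^T)^{-1}$. For the normalizing factor I would use $\lvert\det C^{-1}\rvert = 1/\lvert\det C\rvert = (\det\Sigma)^{-1/2}$. Collecting these gives
\begin{equation}
\nonumber
p_{\mathbf{x}}(x) = (2\pi)^{-n/2}(\det\Sigma)^{-1/2}\exp\!\left(-\tfrac12 (x-\mu)^T \Sigma^{-1}(x-\mu)\right),
\end{equation}
which is exactly the density of $N(\mu,\Sigma)$, completing the argument.

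This result is essentially routine, so there is no substantial obstacle; the only points requiring care are the two algebraic identities $(C^{-1})^T C^{-1} = \Sigma^{-1}$ and $(\det C)^2 = \det\Sigma$, together with the observation that positive definiteness of $\Sigma$ is what guarantees $C$ is invertible so that the change of variables is legitimate. As a cleaner alternative that sidesteps the Jacobian entirely, one could instead compute the characteristic function $\mathbb{E}[\exp(i t^T \mathbf{x})] = \exp(i t^T\mu)\,\mathbb{E}[\exp(i (C^T t)^T \mathbf{z})] = \exp\!\left(i t^T\mu - \tfrac12 t^T CC^T t\right) = \exp\!\left(i t^T\mu - \tfrac12 t^T\Sigma t\right)$, recognizing this as the characteristic function of $N(\mu,\Sigma)$; I would mention this as a remark but carry out the density computation as the main proof since it is the most self-contained.
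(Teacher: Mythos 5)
Your proof is correct, but note that the paper states this lemma without any proof at all---it is invoked as a standard fact used only to sample weights in the experiments---so there is no paper argument to compare against. Your change-of-variables computation is the standard and complete justification: positive definiteness guarantees the Cholesky factor $C$ is invertible, the identities $(C^{-1})^T C^{-1} = \Sigma^{-1}$ and $(\det C)^2 = \det \Sigma$ are exactly the two points that need checking, and the resulting density is that of $N(\mu,\Sigma)$. The characteristic-function alternative you mention is equally valid and arguably cleaner, since it avoids the Jacobian and extends verbatim to positive semidefinite $\Sigma$, where no density exists; either route would be acceptable here.
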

	
	\bibliographystyle{unsrt}
\bibliography{MyBibliography.bib}
\end{document}